\documentclass[lettersize,journal]{IEEEtran}
\usepackage{amsmath,amsfonts,amssymb}
\usepackage{algorithmic}
\usepackage{algorithm}
\usepackage{array}
\usepackage{textcomp}
\usepackage{stfloats}
\usepackage{url}
\usepackage{verbatim}
\usepackage{graphicx}
\usepackage{cite}

\usepackage{xcolor}
\usepackage{booktabs}
\usepackage{makecell}
\usepackage{tabularx,utfsym,colortbl,arydshln}
\usepackage[caption=false, font=footnotesize]{subfig}
\usepackage{multirow}

\newcommand{\methodname}{FedSAF}

\newcommand{\ie}{\textit{i.e.}}
\newcommand{\eg}{\textit{e.g.}}

\usepackage{amsthm}

\newtheorem{proposition}{Proposition}

\theoremstyle{remark}
\newtheorem{remark}{Remark}

\begin{document}

\title{From Coordinate Matching to Structural Alignment: Rethinking Prototype Alignment in Heterogeneous Federated Learning}

\author{Xinghao~Wu, Jianwei~Niu,~\IEEEmembership{Fellow,~IEEE}, Guogang~Zhu, Xuefeng~Liu, Shaojie~Tang and Jiayuan Zhang
		\IEEEcompsocitemizethanks{\IEEEcompsocthanksitem Xinghao Wu, Guogang Zhu, and Jiayuan Zhang are with the State Key Laboratory of Virtual Reality Technology and Systems, School of Computer Science and Engineering, Beihang University, Beijing 100091, China (e-mail: wuxinghao@buaa.edu.cn,  buaa\_zgg@buaa.edu.cn, zhangjiayuan@buaa.edu.cn).
            \IEEEcompsocthanksitem Jianwei Niu and Xuefeng Liu are with the State Key Laboratory of Virtual Reality Technology and Systems, School of Computer Science and Engineering, Beihang University, Beijing 100091, China, and also with Zhongguancun Laboratory, Beijing 100194, China (e-mail: niujianwei@buaa.edu.cn, liu\_xuefeng@buaa.edu.cn).
			\IEEEcompsocthanksitem Shaojie Tang is with the Center for AI Business Innovation, Department of Management Science and Systems, School of Management, University at Buffalo, NY 14260 USA (e-mail: shaojiet@buffalo.edu). 
			\IEEEcompsocthanksitem Corresponding author: Xuefeng Liu.
	}}

\markboth{Journal of \LaTeX\ Class Files,~Vol.~14, No.~8, August~2021}%
{Shell \MakeLowercase{\textit{et al.}}: A Sample Article Using IEEEtran.cls for IEEE Journals}


\maketitle

\begin{abstract}
Heterogeneous federated learning (HtFL) aims to enable collaboration among clients that differ in both data distributions and model architectures. Prototype-based methods, which communicate class-level feature centers (prototypes) instead of full model parameters, have recently shown strong potential for HtFL. Existing prototype-based HtFL methods typically reuse the MSE-based or cosine-based alignment mechanism developed for homogeneous FL when aligning client-specific representations with global prototypes. These approaches are essentially coordinate alignment, where representations of clients are forced to match the global prototypes in the embedding space in an element-wise manner. Such alignment implicitly assumes that all clients should map their representations into the feature subspace\footnote{In this paper, we use \emph{feature subspace} in an informal sense to denote a (typically low-dimensional) representation space induced by a model; it is \emph{not} required to be a linear subspace in the strict algebraic sense. We use \emph{feature space} and \emph{feature subspace} interchangeably unless otherwise specified.} defined by the global prototypes. This assumption is reasonable in homogeneous FL, where all clients share the same feature extractor. However, it becomes problematic in HtFL, since heterogeneous feature extractors naturally induce client-specific feature subspaces, and forcing all clients to optimize within a single global subspace unnecessarily suppresses their learning capacity.
We observe that coordinate alignment implicitly couples two distinct objectives: aligning inter-class semantic structure, which is directly beneficial for classification, and enforcing a shared feature basis, which is unnecessary and even harmful under model heterogeneity. 
Building on this insight, we design \methodname{}, a \underline{Fed}erated \underline{S}tructural \underline{A}lignment \underline{F}ramework for HtFL, which shifts the alignment objective from absolute coordinates to inter-class relational structure. We demonstrate both theoretically and empirically that structural alignment consistently outperforms coordinate alignment in heterogeneous settings. Moreover, our framework is orthogonal to the choice of global prototype construction. Experiments on multiple benchmarks show that, even with vanilla prototype aggregation, our structural alignment already outperforms state-of-the-art prototype-based HtFL methods by up to 3.52\%. 
\end{abstract}

\begin{IEEEkeywords}
Heterogeneous Federated Learning, Prototype Alignment, Structural Alignment.
\end{IEEEkeywords}

\section{Introduction}
\label{sec:intro}
Federated Learning (FL) \cite{FedAvg,10571602} enables clients to collaboratively train models without exposing their raw data. A primary challenge in FL is data heterogeneity, where the data distributions across clients are non-independent and identically distributed (non-IID). Numerous studies have sought to address this issue: some focus on improving the robustness of a single global model \cite{FedProx,SCAFFOLD,MOON,10696955,FedFA+}, while others design personalized models \cite{FedCAC,FedDecomp,DiversiFed,FedPFT} tailored to local client distributions. However, these approaches typically assume that all clients adopt the same model architecture, which is often impractical in real-world scenarios. For instance, in cross-silo FL, institutions often require customized architectures and may be unable to disclose model details due to intellectual property (IP) constraints. In cross-device FL, resource heterogeneity leads to diverse feasible model sizes across devices.

To overcome this limitation, \underline{\textbf{H}}e\underline{\textbf{t}}erogeneous \underline{\textbf{F}}ederated \underline{\textbf{L}}earning (HtFL) \cite{HtFLlib} has recently emerged as a promising direction, enabling collaboration across clients with both data and model heterogeneity. The key challenge is how to extract global knowledge and enable mutual assistance among clients when direct parameter aggregation is infeasible. Early solutions rely on public datasets and apply knowledge distillation to transfer knowledge \cite{FedMD,lin2020ensemble,zhang2021parameterized}. However, acquiring suitable public datasets is difficult in practice, and domain mismatch often hinders performance. Other approaches introduce auxiliary models or modules as carriers of global knowledge \cite{FedKD,FedGKT}, but these incur substantial computation and communication costs and may alter local model structures. A particularly appealing line of work is prototype-based HtFL \cite{FedProto,FedTGP,FedTSP}, where clients exchange class prototypes (feature centers). This strategy substantially reduces communication and computation overhead compared to full-model distillation or auxiliary models, making it highly practical.

\begin{figure}[t]
		\centerline{\includegraphics[width=\linewidth]{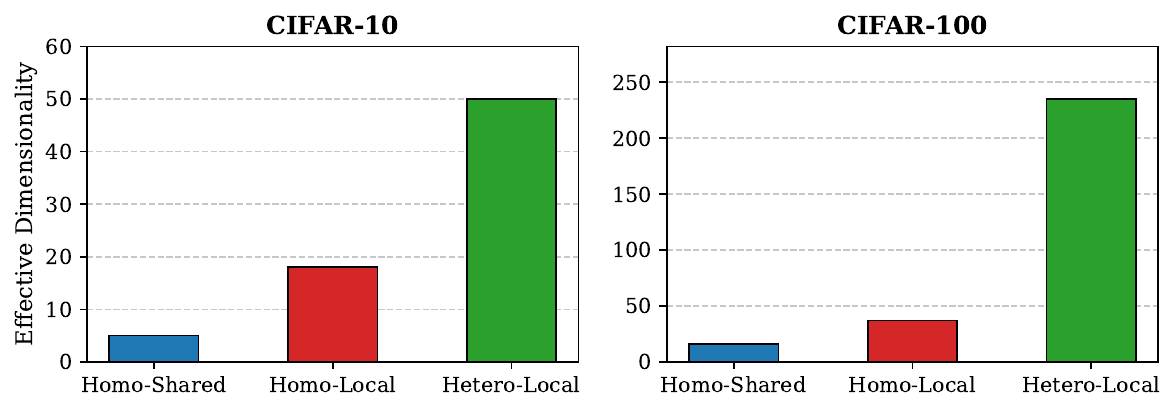}}
        \caption{Comparison of effective dimensionality in homogeneous and heterogeneous FL. We evaluate the feature space discrepancy across three settings: (1) Homo-Shared: clients share both identical model architectures and parameters; (2) Homo-Local: clients share the same architecture but maintain independent parameters; (3) Hetero-Local: clients employ diverse model architectures. The significantly higher effective dimensionality in HtFL suggests that heterogeneous models naturally induce distinct feature subspaces, making rigid coordinate-level alignment less appropriate as it may suppress model individuality.}
		\label{fig:intro_dimension}
\end{figure}

Prototype-based HtFL faces two fundamental challenges: (1) how to construct global prototypes that effectively capture shared characteristics across clients, and (2) how to align local representations with global prototypes so that global knowledge is transferred to local models. Most existing works primarily focus on the first challenge \cite{FedTGP,AlignFed,FedTSP}, while the second has received considerably less attention. Current solutions typically reuse alignment objectives from homogeneous FL, such as L2 distance (e.g., MSE loss)~\cite{FedPAC,FedPLVM} or cosine similarity (e.g., contrastive loss)~\cite{FPL,FedPCL,FedFA}. These approaches are essentially coordinate alignment, where client representations are forced to match the global prototypes in the embedding space in an element-wise manner. This form of alignment implicitly couples two distinct objectives: (i) aligning the inter-class semantic structure encoded by prototypes, and (ii) mapping all client representations into the feature subspace defined by the global prototypes. In homogeneous FL, where all clients share a single global feature extractor and their representations naturally lie in the same feature space, both objectives are jointly satisfied at negligible cost.

In HtFL, however, this assumption no longer holds. As demonstrated in Fig.~\ref{fig:intro_dimension}, we compare the feature space discrepancy across clients under three settings: (1) all clients share the same feature extractor; (2) all clients use the same backbone architecture but maintain independent parameters; and (3) clients employ different feature extractors. We stack prototypes from all clients and perform singular value decomposition (SVD) to compute the effective dimensionality. A higher effective dimensionality indicates larger discrepancies among client feature spaces. Compared with homogeneous FL (Homo-Shared and Homo-Local), the model-heterogeneous scenario (Hetero-Local) exhibits a substantially higher effective dimensionality, suggesting that in HtFL, \emph{clients naturally tend to optimize within their own feature subspaces}. In such a scenario, forcing all clients to optimize within a single global feature subspace can suppress model learning capacity and limit the benefits of collaboration.

\begin{figure}[t]
	\centering
	\subfloat[Coordinate Alignment]{
		\label{coordinate alignment}
		\includegraphics[width=0.48\linewidth]{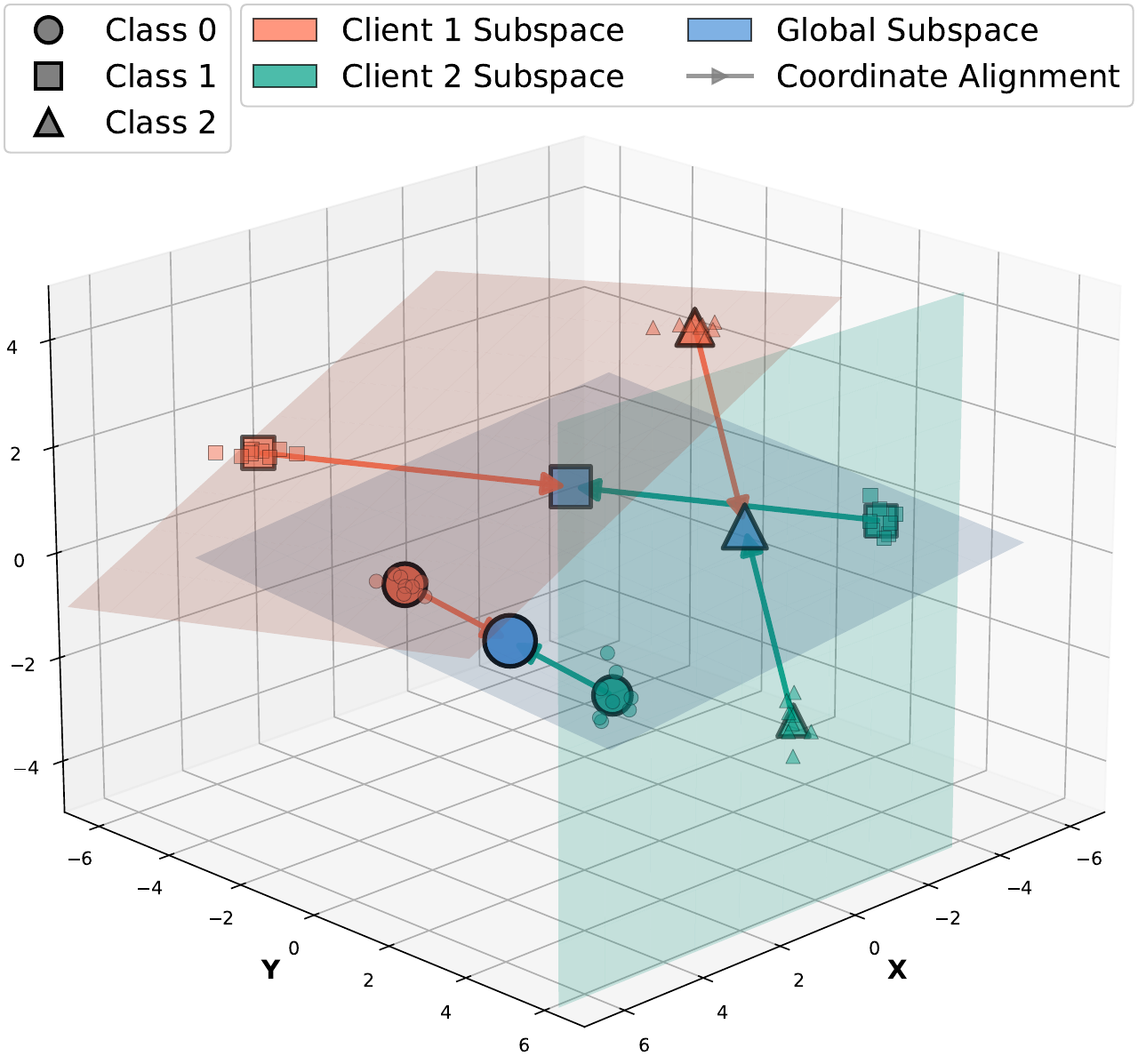}}
	\subfloat[Structural Alignment]{
		\label{structure alignment}
		\includegraphics[width=0.48\linewidth]{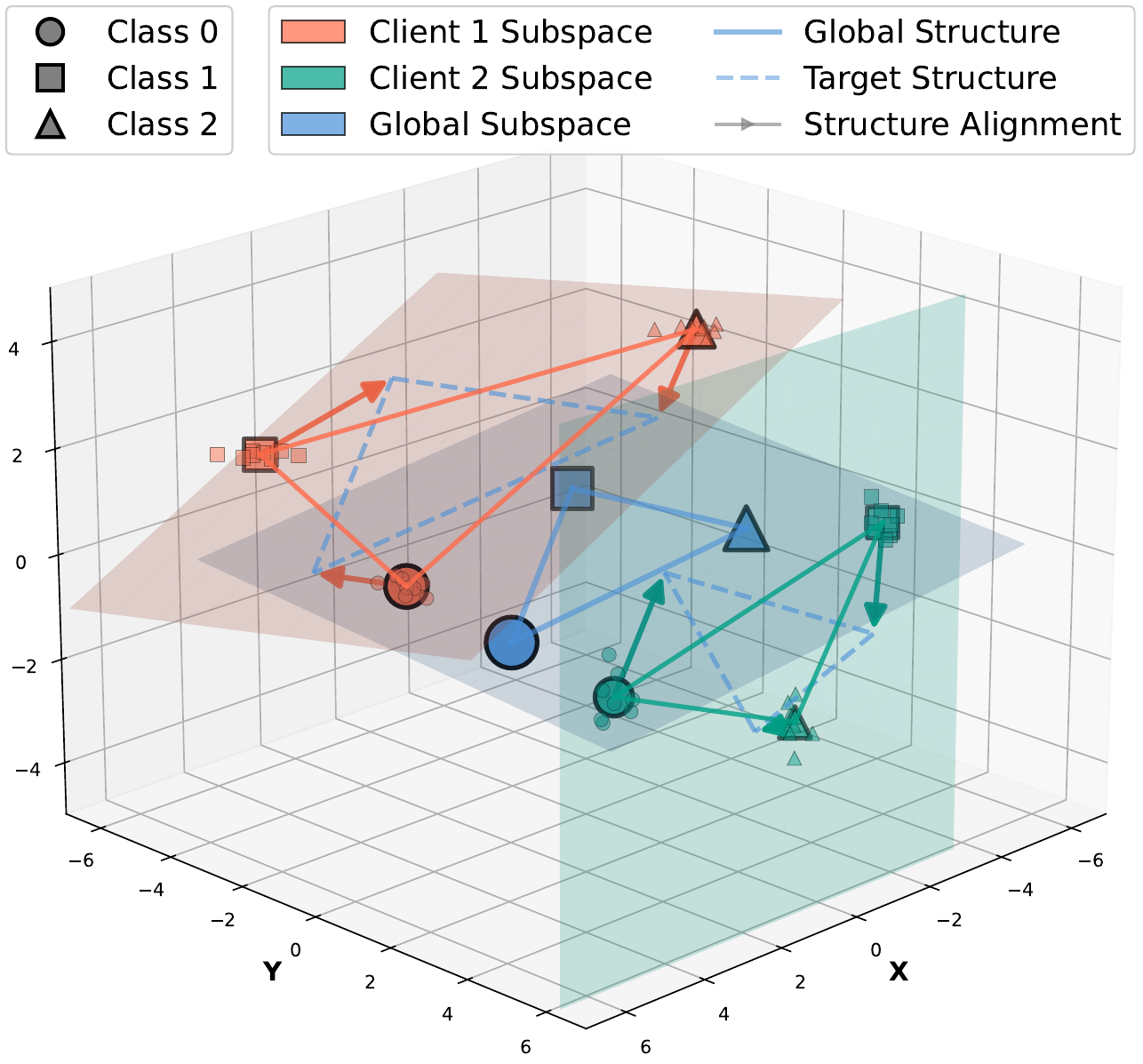}}
	
	\caption{Conceptual comparison between Coordinate Alignment and Structural Alignment in HtFL. (a) \textbf{Coordinate alignment} enforces pointwise matching between client representations and global prototypes in a shared feature space, which may suppress model individuality. (b) Our proposed \textbf{structural alignment} encourages clients to share the semantic structure encoded by inter-class relations while allowing each client to optimize within its own feature subspace}
	\label{fig:intro_alignment_diff}
\end{figure}

The above analysis suggests that only the first objective, aligning inter-class semantic structure, is essential for effective knowledge transfer, whereas enforcing a shared feature basis is unnecessary and potentially harmful in HtFL. Building on this insight, we propose \methodname{}, a \underline{\textbf{Fed}}erated \underline{\textbf{S}}tructural \underline{\textbf{A}}lignment \underline{\textbf{F}}ramework tailored to HtFL. Unlike coordinate alignment, which couples semantic structure transfer with feature-space enforcement, our framework encourages clients to preserve only the inter-class relational structure conveyed by global prototypes, which is directly tied to classification performance, while allowing each client to optimize within its own feature space. A toy example illustrating the difference between the two paradigms is shown in Fig.~\ref{fig:intro_alignment_diff}. Different colors represent prototypes from different clients, and different markers correspond to different classes. In Fig.~\ref{fig:intro_alignment_diff}(a), each client's local prototypes are forced to align point-to-point with the global prototypes in a shared space (blue plane). In Fig.~\ref{fig:intro_alignment_diff}(b), each client optimizes within its own space (red and green planes), while only the inter-class structure defined by the global prototypes (blue triangle) is preserved.

\begin{figure}[t]
		\centerline{\includegraphics[width=\linewidth]{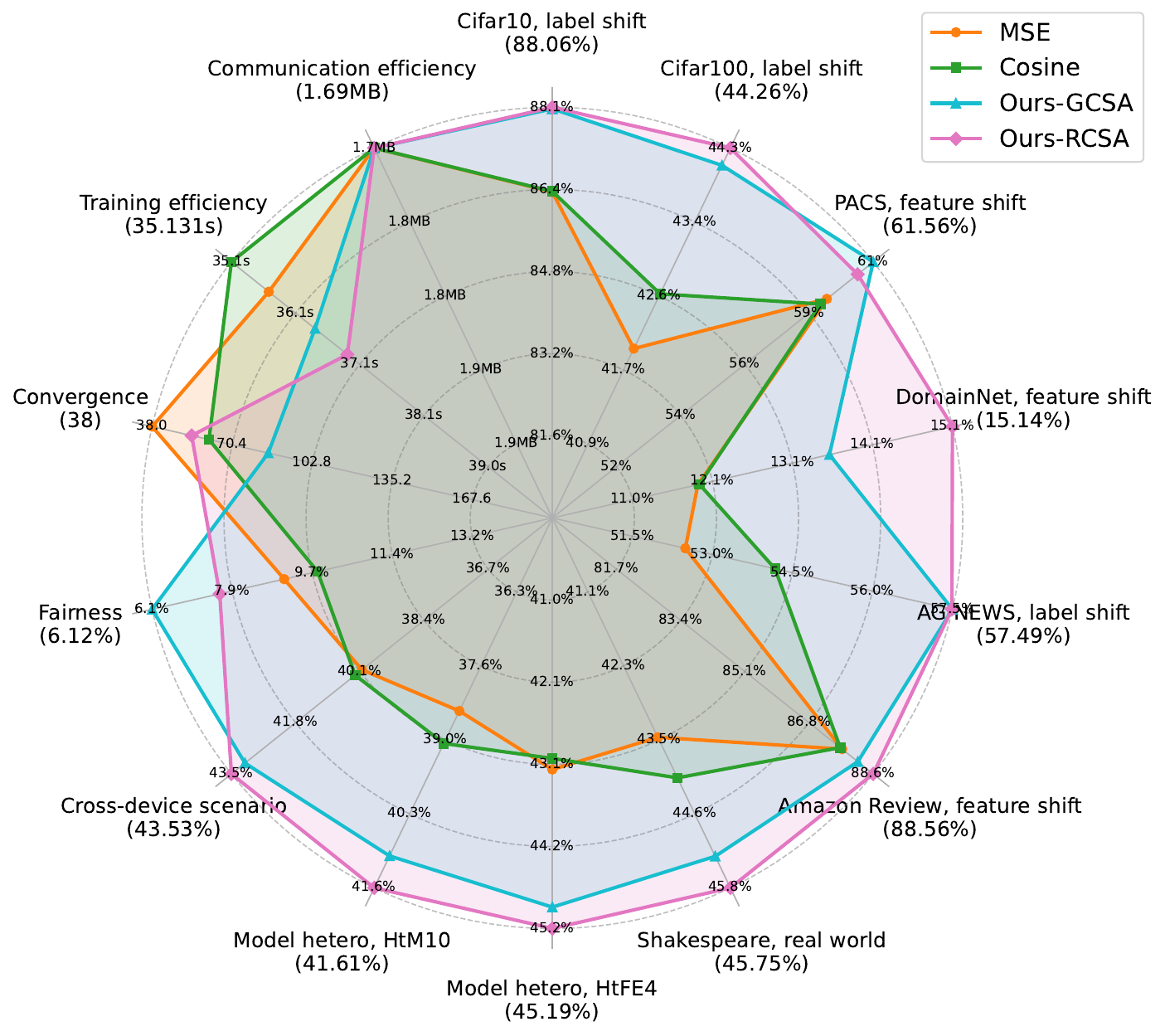}}
        \caption{Comprehensive comparison between coordinate alignment (MSE, Cosine) and structural alignment (Ours-GCSA, Ours-RCSA) across multiple evaluation dimensions, including various datasets, non-IID scenarios, model heterogeneity levels, and efficiency metrics. Our structural alignment consistently outperforms coordinate alignment across nearly all dimensions.}
		\label{fig:comparison_radar}
\end{figure}

To realize this framework, we instantiate it with two effective structural alignment losses. As partially summarized in Fig.~\ref{fig:comparison_radar}, our structural alignment outperforms coordinate alignment across nearly all evaluation dimensions, including diverse datasets, non-IID scenarios (label shift, feature shift, and real-world), multiple modalities (vision and language), varying levels of model heterogeneity, cross-device scalability, and training efficiency. Moreover, our alignment framework is orthogonal to the choice of global prototype construction. Even with vanilla prototype aggregation, our method significantly outperforms state-of-the-art prototype-based HtFL methods, and can be further combined with advanced prototype construction strategies to achieve additional gains.

Our contributions can be summarized as follows:
\begin{itemize}
    \item We identify a critical yet overlooked limitation in existing prototype-based HtFL methods: the reliance on coordinate alignment. We demonstrate, both theoretically and empirically, that forcing heterogeneous clients to align with a single global feature space suppresses the individuality of client-specific feature spaces.
    \item We propose a Structural Alignment Framework that shifts the alignment objective from absolute coordinates to relational geometry. We provide two concrete instantiations, GCSA and RCSA, which effectively transfer global semantic knowledge without compromising the unique feature spaces of local models.
    \item We conduct extensive evaluations across diverse non-IID settings, modalities, and model heterogeneity levels. The results verify that our method significantly outperforms existing baselines. Furthermore, our approach serves as a plug-and-play module that is orthogonal to prototype construction methods, offering a flexible solution for enhancing various HtFL systems.
\end{itemize}
\section{Related Work}
\label{sec:related}
\noindent\textbf{Heterogeneous Federated Learning.} Heterogeneous federated learning (HtFL) enables client collaboration under both data and model heterogeneity, where a key challenge is extracting and transferring global knowledge without direct model aggregation~\cite{HtFLlib,ye2023heterogeneous}. Unlike homogeneous FL that assumes identical model structures, HtFL allows clients to maintain personalized architectures suited to their computational resources and task requirements. 

Existing HtFL methods mainly fall into five categories. \textit{(1) Sub-model extraction methods}\cite{HeteroFL,FedRolex,FIARSE} allow clients to train smaller sub-models extracted from a larger global model, but assume clients train portions of a shared architecture rather than fully independent models. \textit{(2) Knowledge distillation methods}\cite{FedMD,FedGKT,FedGen,FCCL+,RAHFL} transfer knowledge through soft labels or intermediate representations on a shared dataset, often incurring substantial computational overhead or facing challenges when suitable proxy data is unavailable. \textit{(3) Mutual learning methods}\cite{FML,FedMRL,FedType} introduce auxiliary models for bidirectional knowledge transfer, but bring additional computation and communication costs due to auxiliary model training. \textit{(4) Partial parameter sharing methods}\cite{LG-FedAvg,FedGH,FedSSA} assume part of client models remain homogeneous for aggregation, which limits architectural flexibility and may expose partial model structures.

In addition to the above four categories, \textit{prototype-based methods}\cite{FedProto,AlignFed,FedTGP,FedKTL,FedSA,FedTSP} have emerged as an appealing alternative due to their lower computational and communication costs, as clients only need to share class-wise feature centers without requiring public data or auxiliary models. Existing prototype-based HtFL methods primarily differ in how they construct global prototypes. FedProto\cite{FedProto} directly aggregates local prototypes uploaded by clients via weighted averaging. AlignFed~\cite{AlignFed} pre-defines a set of global prototypes as uniformly distributed points on a hypersphere, providing a fixed geometric anchor for alignment. FedTGP~\cite{FedTGP} and FedSA~\cite{FedSA} randomly initialize a set of learnable global prototypes and iteratively update them using local prototypes from clients. FedKTL~\cite{FedKTL} and FedTSP~\cite{FedTSP} take a different approach by extracting global prototypes from pre-trained image generation models or pre-trained language models, leveraging external semantic priors to enrich prototype quality. Despite these advances in prototype construction, the critical question of how to effectively align local representations with global prototypes under heterogeneous feature spaces remains largely underexplored. Most existing methods simply reuse coordinate-level alignment losses (e.g., MSE or cosine similarity) inherited from homogeneous FL, without accounting for the fundamental mismatch between client-specific feature spaces. Our work is orthogonal to these prototype construction strategies and focuses specifically on this overlooked alignment problem.
\section{Methodology}
\subsection{Preliminary}\label{sec:preliminary}
\textbf{Heterogeneous Federated Learning}. We consider an HtFL system with $N$ clients coordinated by a central server. Each client aims to train heterogeneous personalized models $\{w_i \}_{i=1}^N$ to perform well on its own data distribution $\mathbb{D}_i$, where $\mathbb{D}_i \ne \mathbb{D}_j$ for any $i \ne j$. Following FedProto, each $w_i$ is decomposed into a feature extractor $f_i: \mathcal{X} \rightarrow \mathbb{R}^d$, parameterized by $\theta_i$, and a classifier $h_i: \mathbb{R}^d \rightarrow \mathbb{R}^C$, parameterized by $\phi_i$. Here, $\mathcal{X}$ denotes the input space (\eg, raw images). $d$ is the feature dimension, and $C$ is the number of classes. 
The local learning objective on client $i$ is to minimize its empirical loss $\mathcal{L}_D$ over the local distribution $\mathbb{D}_i$. The global HtFL objective can thus be written as:
\begin{equation}
    \min_{\{\theta_i, \phi_i \}_{i=1}^N} \sum_{i=1}^N \frac{1}{N} \mathcal{L}_{D}(\theta_i, \phi_i; \mathbb{D}_i).
\end{equation}

\textbf{Prototype-Based HtFL}. Prototype-based methods maintain a server-side set of class prototypes $\mathcal{P}_g=\{\mathcal{P}_g^c \in \mathbb{R}^d \}_{c=1}^{C}$, where  $\mathcal{P}_g^c$ denotes the global semantic representation of class $c$. During local training, client $i$ uses $\mathcal{P}_g$ as class-level guidance by aligning its representations $z=f_i(x; \theta_i)$ with the corresponding prototype $\mathcal{P}_g^y$. A typical local objective is
\begin{equation}
    \mathcal{L}_i = \mathcal{L}_D(\theta_i, \phi_i; x,y) + \lambda \mathcal{R}(f_i(x; \theta_i), \mathcal{P}_g^y), \text{where} \ x,y \sim \mathcal{D}_i^{\text{train}},
\end{equation}
where $\mathcal{R}$ denotes an alignment loss, $\mathcal{D}_i^{\text{train}}$ denotes the local training set, and $\lambda$ balances supervised learning and prototype guidance.
While existing work largely focuses on improving how $\mathcal{P}_g$ is constructed, our work addresses a complementary and equally important question: how to design the alignment function $\mathcal{R}$ so that it effectively transfers prototype knowledge to clients with heterogeneous feature spaces.


\textbf{Global Prototype Construction}.  
Existing prototype-based HtFL methods mainly differ in how they obtain $\mathcal{P}_g$. A canonical baseline is aggregation (\eg, FedProto): each client forms local class prototypes $\mathcal{P}_i^c$ for class $c$, and the server aggregates them as
\begin{equation}
    \mathcal{P}_g^c
    =
    \frac{\sum_{i=1}^N \omega_i \mathcal{P}_i^c}{\sum_{i=1}^N \omega_i},
\end{equation}
where $\omega_i$ is a client-specific weight (\eg, proportional to its local sample size). Other approaches \cite{AlignFed,FedNH} construct fixed prototypes from external priors,
\begin{equation}
    \mathcal{P}_g^c = a^c,
\end{equation}
or treat $\mathcal{P}_g^c$ as trainable parameters optimized on the server \cite{FedTGP,FedSA}, for example,
\begin{equation}
    \min_{\{\mathcal{P}_g^c\}}
    \sum_{i=1}^N
    \sum_{c \in \mathcal{C}_i}
    \mathcal{L}_{\text{proto}}\big(
        \mathcal{P}_i^c, 
        \mathcal{P}_g^c
    \big),
\end{equation}
where $\mathcal{L}_{\text{proto}}$ encourages consistency between learnable global prototypes and client local prototypes.

In this work, we remain agnostic to the specific mechanism used to construct $\mathcal{P}_g$. We only require that the server can broadcast $\mathcal{P}_g$ each round. In our experiments, we adopt the vanilla aggregation-based prototype construction method (\ie, FedProto) by default.


\textbf{Coordinate Alignment.} 
Most existing prototype-based HtFL methods reuse coordinate-level losses such as MSE or cosine similarity, which enforce element-wise matching between $z$ and $\mathcal{P}_g^y$:
\begin{equation}
    \mathcal{L}_{\text{coord}}(z, \mathcal{P}_g^y)
    =
    \mathcal{D}_{\text{coord}}(z, \mathcal{P}_g^y),
\end{equation}
where $\mathcal{D}_{\text{coord}}$ is instantiated as an $\ell_2$ loss
$ \| z - \mathcal{P}_g^y \|_2^2 $
or a cosine-based loss enforcing directional colinearity. This design is natural when all representations reside in a shared feature subspace.
However, under model heterogeneity, different encoders can realize the same semantics in different (feature) coordinate bases. The next subsection makes this hidden coupling explicit and motivates a shift from coordinate matching to geometry matching.

\subsection{Motivation: Coordinate Alignment Couples Geometry Matching and Basis Matching}
\label{sec:coord-coupling}

Coordinate alignment operates in the \emph{raw feature coordinates}, and thus implicitly assumes that client representations and global prototypes are expressed in a shared feature subspace (\ie, a shared basis). This assumption is often acceptable in homogeneous FL, but becomes fragile in HtFL where heterogeneous feature extractors naturally induce client-specific feature spaces related by unknown transformations.

To expose what coordinate alignment actually enforces, we analyze its batch form.
Given a mini-batch of $n$ samples, let $Z\in\mathbb{R}^{n\times d}$ stack client representations (row-wise) and let $P\in\mathbb{R}^{n\times d}$ stack their corresponding global prototypes.
Let $\hat Z$ and $\hat P$ be the row-wise $\ell_2$-normalized versions of $Z$ and $P$.\footnote{With row-wise normalization, MSE minimization on $\hat Z,\hat P$ is equivalent to cosine-similarity maximization up to constants.}

\begin{proposition}[Implicit Decomposition of Coordinate Alignment]
\label{prop:coord_decomposition}
Let
\begin{equation}
\mathcal{L}_{\mathrm{coord}}(Z,P):=\|\hat Z-\hat P\|_F^2.
\end{equation}
Let $R^\star$ be the (orthogonal) Procrustes solution
\begin{equation}
R^\star \in \arg\min_{R\in\mathcal{O}(d)} \|\hat Z-\hat P R\|_F^2
\ \Longleftrightarrow\
R^\star \in \arg\max_{R\in\mathcal{O}(d)} \langle \hat Z,\hat P R\rangle,
\end{equation}
where $\langle A,B\rangle:=\mathrm{tr}(A^\top B)$ is the Frobenius inner product.
Then $\mathcal{L}_{\mathrm{coord}}$ admits the exact decomposition
\begin{equation}
\label{eq:coord_decomp}
\mathcal{L}_{\mathrm{coord}}(Z,P)
=
\underbrace{\min_{R\in\mathcal{O}(d)} \|\hat Z-\hat P R\|_F^2}_{\mathcal{L}_{\mathrm{shape}}}
+
\underbrace{2\Big(\langle \hat Z,\hat P R^\star\rangle - \langle \hat Z,\hat P\rangle\Big)}_{\mathcal{L}_{\mathrm{rigid}}}.
\end{equation}
Moreover, $\mathcal{L}_{\text{rigid}}\ge 0$, and $\mathcal{L}_{\text{rigid}}=0$ iff the identity rotation is optimal for the Procrustes problem, \ie,
$\langle \hat{Z}, \hat{P} \rangle = \max_{R \in \mathcal{O}(d)} \langle \hat{Z}, \hat{P}R \rangle$.
\end{proposition}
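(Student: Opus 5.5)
The plan is to expand both Frobenius norms and use the orthogonal invariance of the Frobenius norm. Since $R^\top R = I$ for every $R\in\mathcal{O}(d)$, we have $\|\hat P R\|_F^2=\mathrm{tr}(R^\top \hat P^\top \hat P R)=\|\hat P\|_F^2$. Hence, for any orthogonal $R$,
\begin{equation*}
\|\hat Z-\hat P R\|_F^2=\|\hat Z\|_F^2+\|\hat P\|_F^2-2\langle \hat Z,\hat P R\rangle .
\end{equation*}
Two consequences follow. First, the only $R$-dependent term is $-2\langle \hat Z,\hat P R\rangle$, which justifies the stated equivalence between the min- and max-formulations of the Procrustes problem. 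Second, the minimum exists because $\mathcal{O}(d)$ is compact and the objective is continuous, so $R^\star$ is well defined; any maximizer may be chosen, since only the optimal value enters the decomposition.

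Next I would specialize the identity to $R=I$, which gives $\mathcal{L}_{\mathrm{coord}}=\|\hat Z\|_F^2+\|\hat P\|_F^2-2\langle \hat Z,\hat P\rangle$. Specializing instead to $R=R^\star$ gives $\mathcal{L}_{\mathrm{shape}}=\|\hat Z\|_F^2+\|\hat P\|_F^2-2\langle \hat Z,\hat P R^\star\rangle$. Subtracting the second from the first, the norm terms cancel and we are left with exactly $2(\langle \hat Z,\hat P R^\star\rangle-\langle \hat Z,\hat P\rangle)=\mathcal{L}_{\mathrm{rigid}}$. This is the decomposition in Eq.~\eqref{eq:coord_decomp}. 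Row normalization plays no role beyond fixing $\hat Z,\hat P$; in fact $\|\hat Z\|_F^2=\|\hat P\|_F^2=n$, though we never need this.

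For the final claims, note that $I\in\mathcal{O}(d)$ is feasible. Therefore $\langle \hat Z,\hat P R^\star\rangle=\max_R\langle \hat Z,\hat P R\rangle\ge\langle \hat Z,\hat P\rangle$, which gives $\mathcal{L}_{\mathrm{rigid}}\ge 0$. Equality holds iff $\langle \hat Z,\hat P\rangle$ attains the maximum, that is, iff $I$ is itself a Procrustes maximizer. This is the stated characterization.

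There is no serious obstacle. The only points needing care are the existence of $R^\star$ (compactness) and the observation that the decomposition depends on $R^\star$ only through the optimal value, so it is independent of which maximizer is chosen. Optionally, one can remark via the SVD $\hat P^\top\hat Z=U\Sigma V^\top$ that the maximum equals the nuclear norm $\|\hat P^\top \hat Z\|_*$, attained at $R^\star=UV^\top$. Then $\mathcal{L}_{\mathrm{rigid}}=2(\|\hat P^\top\hat Z\|_*-\mathrm{tr}(\hat P^\top\hat Z))$, and it vanishes iff $\hat P^\top\hat Z$ is symmetric positive semidefinite. This closed form is not required for the proof.
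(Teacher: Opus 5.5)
Your proposal is correct and follows the paper's proof essentially step for step. You expand the Frobenius norms using $\|\hat P R\|_F=\|\hat P\|_F$, evaluate at $R=I$ and $R=R^\star$, subtract, and use the feasibility of $I\in\mathcal{O}(d)$ for both $\mathcal{L}_{\mathrm{rigid}}\ge 0$ and the equality case. Your extra points are correct refinements the paper omits: compactness of $\mathcal{O}(d)$ guarantees $R^\star$ exists, and the nuclear-norm closed form shows $\mathcal{L}_{\mathrm{rigid}}=0$ exactly when $\hat P^\top\hat Z$ is symmetric positive semidefinite.
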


Eq.~\eqref{eq:coord_decomp} reveals that minimizing coordinate alignment implicitly minimizes two coupled objectives:

\textit{(1) Geometry/structure matching ($\mathcal{L}_{\mathrm{shape}}$).}
This term measures the discrepancy between $\hat Z$ and $\hat P$ up to an optimal rotation.
In particular,
\begin{align}
& \mathcal{L}_{\mathrm{shape}}(Z,P)=0 \nonumber \\
&\Longleftrightarrow\
\exists R\in\mathcal{O}(d):\ \hat Z=\hat P R
\ \Longleftrightarrow\
\hat Z\hat Z^\top=\hat P\hat P^\top.
\end{align}
Hence $\mathcal{L}_{\mathrm{shape}}$ captures a rotation-invariant notion of relational geometry
(\eg, Gram-matrix relations), which is the desirable component for transferring class relations across clients.

\textit{(2) Rigid basis-matching penalty ($\mathcal{L}_{\mathrm{rigid}}$).}
This term is the extra cost incurred by enforcing alignment in a fixed global basis (effectively $R=I$),
instead of allowing the optimal rotation $R^\star$.
In homogeneous FL, feature spaces are naturally aligned and typically $R^\star\approx I$,
making $\mathcal{L}_{\mathrm{rigid}}$ negligible, as evidenced by the low effective dimensionality in the Homo-Shared and Homo-Local settings in Fig.~\ref{fig:intro_dimension}.
In HtFL, however, $R^\star$ can deviate significantly from $I$, so driving $\mathcal{L}_{\mathrm{coord}}$ toward zero implicitly pressures the client to ``bend'' its representation toward the global basis, potentially suppressing model individuality.

This analysis points to a more suitable objective for HtFL: transferring knowledge by matching
\emph{relations/geometry} rather than enforcing pointwise coordinate equality.
In what follows, we formalize a structure-alignment framework that isolates the geometry-matching component while avoiding rigid basis matching.

\subsection{Structure-Alignment Framework}\label{sec:structural alignment}
Motivated by the above decomposition, we generalize prototype alignment from pointwise coordinate matching to structure-level matching.

Let $P \in \mathbb{R}^{n \times d}$ and $Q \in \mathbb{R}^{n \times d}$ denote two sets of vectors arranged as matrices, where each row is a prototype or an instance feature. We define a structure operator 
\begin{equation}
    \mathcal{S}: \mathbb{R}^{n \times d} \rightarrow \mathcal{Z},
\end{equation}
which maps a set of representations to a structural descriptor $S(P)$ that captures relations among rows (\eg, pairwise distances/angles, Gram matrices, or covariance statistics). Structural alignment is then defined as
\begin{equation}
    \mathcal{L}_{\text{struct}}(P, Q) 
    = \mathcal{D}\big(\mathcal{S}(P), \mathcal{S}(Q)\big),
    \label{eq:struct_loss_general}
\end{equation}
where $\mathcal{D}$ is a discrepancy measure defined in the structural space $\mathcal{Z}$ (\eg, Frobenius MSE between matrices, or $1$ minus cosine similarity between vectorized structure descriptors).
This formulation decouples alignment from absolute coordinates: $\mathcal{L}_{\text{struct}}$ can remain small even when $P$ and $Q$ differ by client-specific transformations, as long as their internal semantic geometry is consistent.

\textbf{Instantiations of Structural Alignment.}
Different choices of the structure operator $\mathcal{S}$ and the discrepancy measure $\mathcal{D}$ yield different losses. In this work, we consider the following two concrete instantiations.

\emph{(1) Gram-Cosine structural alignment (GCSA).}
Let $\bar p:=\frac{1}{n}\mathbf{1}^\top P\in\mathbb{R}^{1\times d}$ be the mean row vector,
$P_c := P-\mathbf{1}\bar p$ be the centered matrix, and define the centered Gram matrix
$K_P := P_c P_c^\top$.
We measure similarity via cosine similarity between Gram matrices:
\begin{equation}
    \mathcal{L}_{\mathrm{GCSA}}(P, Q)
    =
    1 - 
    \frac{
        \langle K_P, K_Q \rangle
    }{
        \lVert K_P \rVert_F \, \lVert K_Q \rVert_F
    }.
\end{equation}
GCSA is invariant to translation (via centering), rotation, and positive scaling, as formalized in Proposition~\ref{prop:gcsa_invariance}.

\begin{figure*}[t]
		\centerline{\includegraphics[width=0.8\linewidth]{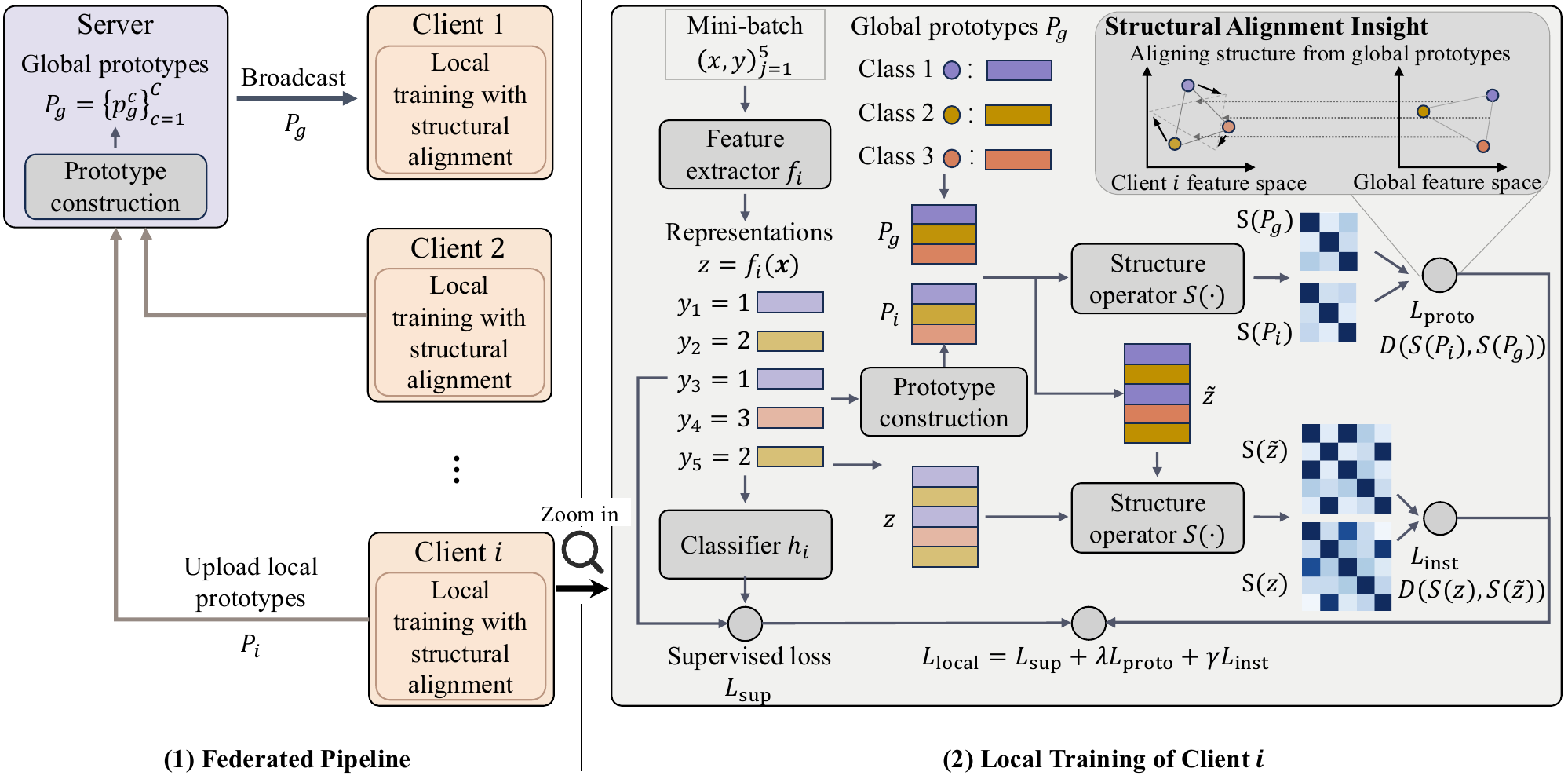}}
        \caption{Overview of \methodname{}. (1) \textbf{Federated Pipeline}: The server constructs global class prototypes $\mathcal{P}_g$ by aggregating uploaded local prototypes and subsequently broadcasts $\mathcal{P}_g$ to heterogeneous clients. (2) \textbf{Local Training on Client $i$}: For a given mini-batch, the client extracts representations $z=f_i(x)$ and forms batch-wise local prototypes $\mathcal{P}_i$. A structure operator $S(\cdot)$ is applied to compute the semantic structures of both local and global prototypes, utilizing a prototype-level loss $\mathcal{L}_{\text{proto}}$ to transfer inter-class relations. The client further assembles $\tilde{z}$ by indexing into $\mathcal{P}_g$ according to the label sequence of $z$, and applies an instance-level loss $\mathcal{L}_{\text{inst}}$ to steer individual sample features toward a geometry compatible with the global prototype structure.}
		\label{fig:overview}
\end{figure*}

\begin{proposition}[Structure invariance of GCSA]
\label{prop:gcsa_invariance}
Let $P, Q \in \mathbb{R}^{n \times d}$.
Suppose $Q$ is obtained from $P$ via an affine feature-space transformation
\begin{equation}
    Q = \alpha P R + \mathbf{1} b^\top, \nonumber
\end{equation}
where $\alpha > 0$, $R^\top R = I$, and $b \in \mathbb{R}^d$.
Then the Gram Cosine structural alignment loss satisfies
\[
    \mathcal{L}_{\mathrm{GCSA}}(P, Q) = 0.
\]
\end{proposition}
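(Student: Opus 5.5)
The plan is to show directly that the centered Gram matrix transforms by a positive scalar, $K_Q=\alpha^2 K_P$. Cosine similarity between matrices is invariant under positive rescaling of either argument, so this immediately gives a similarity of $1$ and hence zero loss. The argument has three steps: centering removes the translation $b$; orthogonality of $R$ cancels in the Gram product; and the scale $\alpha^2>0$ cancels in the normalized inner product.

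\emph{Step 1 (translation).} Write the centering operator as $H:=I_n-\frac{1}{n}\mathbf{1}\mathbf{1}^\top$, so that $P_c=HP$ and $Q_c=HQ$. Since $H\mathbf{1}=0$, we obtain
\[
Q_c=H(\alpha PR+\mathbf{1}b^\top)=\alpha HPR=\alpha P_c R.
\]

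\emph{Step 2 (rotation and scale).} Using $RR^\top=I$, compute
\[
K_Q=Q_cQ_c^\top=\alpha^2 P_c RR^\top P_c^\top=\alpha^2 K_P.
\]
This step needs $RR^\top=I$, not merely $R^\top R=I$. For a square $R\in\mathbb{R}^{d\times d}$ the two conditions are equivalent, so it suffices to note that $R$ is $d\times d$, as the product $PR$ requires.

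\emph{Step 3 (cosine).} Substituting into the definition,
\[
\frac{\langle K_P,K_Q\rangle}{\|K_P\|_F\|K_Q\|_F}=\frac{\alpha^2\|K_P\|_F^2}{\|K_P\|_F\cdot\alpha^2\|K_P\|_F}=1,
\]
so $\mathcal{L}_{\mathrm{GCSA}}(P,Q)=0$.

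The main obstacle is a degenerate case rather than any computation. If $K_P=0$, meaning all rows of $P$ coincide, then $K_Q=0$ as well and the cosine is $0/0$, so the loss is undefined. I would handle this by stating that the claim applies whenever $K_P\neq 0$, or equivalently by adopting the convention common in implementations that a small $\epsilon$ appears in the denominator and that the loss of two zero Gram matrices is $0$. I would make this assumption explicit in the proof.
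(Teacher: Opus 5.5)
Your proof is correct and follows the paper's argument essentially step for step. Centering removes the translation to give $Q_c=\alpha P_c R$, orthogonality then yields $K_Q=\alpha^2 K_P$, and the positive scale cancels in the cosine. You are also more careful than the paper in two places: you treat the degenerate case $K_P=0$ explicitly as a $0/0$ that needs a convention, where the paper only calls it ``trivially satisfied,'' and you justify $RR^\top=I$ from squareness of $R$, which the paper uses without comment.
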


\emph{(2) RDM-Cosine structural alignment (RCSA).}
To directly match pairwise relational geometry, we use a representational dissimilarity matrix (RDM). Let $\tilde{P}$ be the row-wise $\ell_2$-normalized version of $P$. We define the squared-distance RDM:
\[
    \mathrm{RDM}_P[i,j] = \big\lVert \tilde{P}[i,:] - \tilde{P}[j,:] \big\rVert_2^2.
\]
We vectorize the upper-triangular entries, denoted $\text{vec}(\text{RDM}_P)$, and apply cosine similarity:
\begin{equation}
    \mathcal{L}_{\mathrm{RCSA}}(P, Q)
    =
    1 -
    \frac{
        \langle \mathrm{vec}(\mathrm{RDM}_P), \mathrm{vec}(\mathrm{RDM}_Q) \rangle
    }{
        \big\lVert \mathrm{vec}(\mathrm{RDM}_P) \big\rVert_2 \, 
        \big\lVert \mathrm{vec}(\mathrm{RDM}_Q) \big\rVert_2
    }.
\end{equation}
RCSA is invariant to orthogonal transformations, as formulated in Proposition~\ref{prop:rcsa_invariance}.

\begin{proposition}[Structure invariance of RCSA]
\label{prop:rcsa_invariance}
Let $P, Q \in \mathbb{R}^{n \times d}$.
Suppose $Q$ is obtained from $P$ via an orthogonal feature-space transformation
\begin{equation}
    Q = P R, \nonumber
\end{equation}
where $R^\top R = I$.
Then the RDM Cosine structural alignment loss satisfies
\[
    \mathcal{L}_{\mathrm{RCSA}}(P, Q) = 0.
\]
\end{proposition}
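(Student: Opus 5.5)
The plan is to show that the RDM is unchanged by the map $P\mapsto PR$. Once $\mathrm{RDM}_Q=\mathrm{RDM}_P$ holds entrywise, the loss becomes one minus the cosine of a vector with itself, which is zero.

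\textbf{Step 1: normalization commutes with $R$.} Write $p_i:=P[i,:]\in\mathbb{R}^{1\times d}$, so that the rows of $Q$ are $q_i=p_iR$. Because $R^\top R=I$, we have $\|q_i\|_2^2=p_iRR^\top p_i^\top$. For a square $R$, the identity $R^\top R=I$ implies $RR^\top=I$, hence $\|q_i\|_2=\|p_i\|_2$. Consequently
\[
\tilde Q[i,:]=\frac{p_iR}{\|p_i\|_2}=\tilde P[i,:]\,R,
\]
so $\tilde Q=\tilde PR$.

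\textbf{Step 2: the RDM is invariant.} For every pair $i,j$,
\[
\mathrm{RDM}_Q[i,j]=\|(\tilde P[i,:]-\tilde P[j,:])R\|_2^2=(\tilde P[i,:]-\tilde P[j,:])RR^\top(\tilde P[i,:]-\tilde P[j,:])^\top=\mathrm{RDM}_P[i,j].
\]
Therefore $\mathrm{vec}(\mathrm{RDM}_Q)=\mathrm{vec}(\mathrm{RDM}_P)=:v$, and
\[
\mathcal{L}_{\mathrm{RCSA}}(P,Q)=1-\frac{\langle v,v\rangle}{\|v\|_2^2}=0.
\]

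\textbf{Main obstacle: well-definedness.} The argument itself is routine; the real issue is that the expressions must be defined. Row normalization requires $p_i\neq 0$, and the cosine requires $v\neq 0$. The vector $v$ vanishes exactly when all normalized rows coincide, or when $n=1$. I would state these as standing non-degeneracy assumptions, matching those implicitly needed for the definition of $\mathcal{L}_{\mathrm{RCSA}}$. Alternatively, one can adopt the convention that the loss is $0$ in the degenerate case. This point about the domain is the only one needing care.
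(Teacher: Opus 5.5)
Your proof is correct and follows the same route as the paper: row norms are preserved, so $\tilde Q=\tilde P R$; hence $\mathrm{RDM}_Q=\mathrm{RDM}_P$ entrywise, and the cosine of the vectorized RDM with itself is $1$. You are slightly more careful than the paper in two places: you note that for square $R$ the condition $R^\top R=I$ gives $RR^\top=I$, which is what the row-vector norm computation actually uses, and you flag that zero rows make the normalization undefined, whereas the paper only mentions the $v=0$ degeneracy.
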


\begin{remark}[Structural alignment versus coordinate alignment]
\label{rem:structure_vs_coordinate}
Propositions~\ref{prop:gcsa_invariance} and~\ref{prop:rcsa_invariance} show that both GCSA and RCSA impose no penalty under feature-space transformations naturally induced by heterogeneous models.
In contrast, coordinate alignment incurs nonzero loss under such transformations, as it enforces pointwise matching in a shared feature space with a fixed basis.
Notably, RCSA is invariant to orthogonal transformations but penalizes translations, while GCSA further removes translation effects via centering.
\end{remark}

Both GCSA and RCSA conform to the unified formulation $L_{\mathrm{struct}}(P,Q)=D(S(P),S(Q))$
and instantiate the proposed structure-level alignment paradigm with different structural descriptors.
Next, we show how to deploy this framework in prototype-based HtFL, yielding a simple yet effective
two-level structural alignment objective.

\subsection{Structural Alignment for Prototype-Based HtFL}

\textbf{\methodname{} Workflow.}
We follow the standard prototype-based HtFL protocol. In each communication round, the server maintains a global prototype set $P_g=\{P_g^c\}_{c=1}^C$ (constructed by any rule in Sec.~\ref{sec:preliminary}) and broadcasts it to clients. Each client then performs local updates using its private data and uploads its local class prototypes to the server for updating $P_g$. Fig.~\ref{fig:overview} provides an overview of the overall workflow and the proposed two-level structural alignment used in local training.

\textbf{Local Prototype Construction.}
For client $i$, given a mini-batch $\{(x_j,y_j)\}_{j=1}^{B}$ and representations $z_j=f_i(x_j)$, we compute batch-wise local prototypes as the mean feature vector for each class:
\begin{equation}
    \mathcal{P}_i^c
    =
    \frac{1}{|\{j: y_j = c\}|}
    \sum_{j: y_j = c} z_j.
\end{equation}
Stacking prototypes for classes observed in the batch yields $P_i \in \mathbb{R}^{n_i \times d}$, where $n_i$ is the number of present classes.

\textbf{Prototype-level Structural Alignment.}
Let $\mathcal{C}_i$ denote the set of labels appearing in the current batch. We align the structure of the corresponding local and global prototypes:
\begin{equation}
    \mathcal{P}_i^{\mathcal{C}_i}
    = 
    \{ \mathcal{P}_i^c : c \in \mathcal{C}_i \},
    \qquad
    \mathcal{P}_g^{\mathcal{C}_i}
    =
    \{ \mathcal{P}_g^c : c \in \mathcal{C}_i \},
\end{equation}
\begin{equation}
    \mathcal{L}_{\text{proto}}^{(i)}
    =
    \mathcal{D}
    \big(
        \mathcal{S}(\mathcal{P}_i^{\mathcal{C}_i}),
        \mathcal{S}(\mathcal{P}_g^{\mathcal{C}_i})
    \big).
\end{equation}
This term transfers the global inter-class relations without requiring coordinate-wise matching.

\textbf{Instance-level Structural Alignment.}
Prototype alignment constrains class-wise summaries. To further shape the embedding space at the instance level, we align the structure of batch features to a prototype-induced target. Let $z\in \mathbb{R}^{B\times d}$ be the batch feature matrix whose rows are $z_j$, and define a prototype-substituted matrix $\tilde{z}$ by replacing each row with its corresponding global prototype:
\[
\tilde{z}_j=\mathcal{P}_g^{y_j},\quad \tilde{z}=[\tilde{z}_1;\ldots;\tilde{z}_B].
\]
We then define
\begin{equation}
    \mathcal{L}_{\text{inst}}^{(i)}
    =
    \mathcal{D}
    \big(
        \mathcal{S}(z),
        \mathcal{S}(\tilde{z})
    \big).
\end{equation}
Intuitively, $\mathcal{L}_{\text{inst}}^{(i)}$ steers instance features toward a geometry that is compatible with the global prototype structure, while still leaving the client free to choose its own coordinates.

\textbf{Overall local objective.}
The final local objective combines supervised learning and the two structural alignment terms:
\begin{equation}
    \mathcal{L}_{\text{local}}^{(i)}
    =
    \mathcal{L}_{\text{sup}}^{(i)}
    +
    \lambda \mathcal{L}_{\text{proto}}^{(i)}
    +
    \gamma \mathcal{L}_{\text{inst}}^{(i)}.
\end{equation}
Here $\lambda$ and $\gamma$ control the strengths of prototype-level and instance-level alignment, respectively.
\section{Experiments}
\subsection{Experimental Setup}
\definecolor{c1}{RGB}{189,230,205}
\definecolor{c2}{RGB}{228,238,188}
\definecolor{c3}{RGB}{255,248,197}
\definecolor{c4}{RGB}{238,238,238}
\setlength{\intextsep}{0.1cm} 
\begin{table*}[tb]
	\setlength{\abovecaptionskip}{0.cm}
\setlength{\belowcaptionskip}{-0.cm}
\setlength\tabcolsep{4.0pt}
	\begin{center}
    	\renewcommand\arraystretch{0.8}
 \scriptsize
 \caption{Test accuracy (\%) of different methods under various non-IID scenarios on CIFAR-10, CIFAR-100, and Tiny ImageNet with $\text{HtFE}_9$. The top three results are highlighted as \colorbox{c1}{\textbf{first}}, \colorbox{c2}{second}, and \colorbox{c3}{third}, respectively.}
        \label{tab:effect of data hetero}
		\begin{tabular}{@{}cccccccccc@{}}
			\toprule
			&  \multicolumn{3}{c}{CIFAR-10}             &  \multicolumn{3}{c}{CIFAR-100}              & \multicolumn{3}{c}{Tiny ImageNet}           \\ \midrule
			Methods &  $\alpha=0.1$ & $\alpha=0.5$ & $\alpha=1.0$ &  $\alpha=0.1$ & $\alpha=0.5$ & $\alpha=1.0$ & $\alpha=0.1$ & $\alpha=0.5$ & $\alpha=1.0$ \\ \midrule
             \multicolumn{10}{c}{\text{Prototype-based HtFL Methods}} \\
             \midrule
            \makecell{FedProto} &  \makecell{85.81\scriptsize$\pm$0.06} & \makecell{61.53\scriptsize$\pm$0.21} & \makecell{54.34\scriptsize$\pm$0.14} & \makecell{41.07\scriptsize$\pm$0.17} & \makecell{24.48\scriptsize$\pm$0.09} & \makecell{18.89\scriptsize$\pm$0.29} &  \makecell{31.52\scriptsize$\pm$0.20} & \makecell{16.96\scriptsize$\pm$0.10} & \makecell{12.51\scriptsize$\pm$0.05} \\
            \makecell{FedTGP} &  \makecell{85.73\scriptsize$\pm$0.03} & \makecell{61.60\scriptsize$\pm$0.31} & \makecell{53.96\scriptsize$\pm$0.25} & \makecell{41.37\scriptsize$\pm$0.01} & \makecell{24.43\scriptsize$\pm$0.17} & \makecell{18.33\scriptsize$\pm$0.13} &  \makecell{31.16\scriptsize$\pm$0.10} & \makecell{15.70\scriptsize$\pm$0.02} & \makecell{11.80\scriptsize$\pm$0.11} \\
            \makecell{AlignFed} &  \makecell{85.80\scriptsize$\pm$0.34} & \makecell{62.43\scriptsize$\pm$0.10} & \makecell{56.59\scriptsize$\pm$0.30} & \makecell{41.88\scriptsize$\pm$0.05} & \makecell{24.22\scriptsize$\pm$0.23} & \makecell{18.22\scriptsize$\pm$0.26} &  \makecell{30.77\scriptsize$\pm$0.13} & \makecell{14.54\scriptsize$\pm$0.15} & 
            \makecell{10.73\scriptsize$\pm$0.02} \\
            \midrule
            \makecell{\textbf{Ours-GCSA}} &  \makecell{\multicolumn{1}{>{\columncolor{c2}}c}{\text{88.03\scriptsize$\pm$0.13}}} & \makecell{\multicolumn{1}{>{\columncolor{c1}}c}{\textbf{65.95\scriptsize$\pm$0.21}}} & \makecell{\multicolumn{1}{>{\columncolor{c2}}c}{57.29\scriptsize$\pm$0.18}} & \makecell{\multicolumn{1}{>{\columncolor{c2}}c}{\text{44.06\scriptsize$\pm$0.16}}} & \makecell{\multicolumn{1}{>{\columncolor{c2}}c}{26.83\scriptsize$\pm$0.35}} & \makecell{\multicolumn{1}{>{\columncolor{c1}}c}{\textbf{21.04\scriptsize$\pm$0.22}}} &  \makecell{\multicolumn{1}{>{\columncolor{c2}}c}{32.87\scriptsize$\pm$0.19}} & \makecell{\multicolumn{1}{>{\columncolor{c1}}c}{\textbf{18.02\scriptsize$\pm$0.12}}} & \makecell{\multicolumn{1}{>{\columncolor{c1}}c}{\textbf{14.11\scriptsize$\pm$0.08}}} \\
            \makecell{\textbf{Ours-RCSA}} &  \makecell{\multicolumn{1}{>{\columncolor{c1}}c}{\textbf{88.06\scriptsize$\pm$0.24}}} & \makecell{\multicolumn{1}{>{\columncolor{c2}}c}{\text{65.63\scriptsize$\pm$0.19}}} & \makecell{\multicolumn{1}{>{\columncolor{c1}}c}{\textbf{57.57\scriptsize$\pm$0.32}}} & \makecell{\multicolumn{1}{>{\columncolor{c1}}c}{\textbf{44.26\scriptsize$\pm$0.11}}} & \makecell{\multicolumn{1}{>{\columncolor{c1}}c}{\textbf{26.97\scriptsize$\pm$0.24}}} & \makecell{\multicolumn{1}{>{\columncolor{c2}}c}{20.83\scriptsize$\pm$0.29}} &  \makecell{\multicolumn{1}{>{\columncolor{c1}}c}{\textbf{33.08\scriptsize$\pm$0.15}}} & \makecell{\multicolumn{1}{>{\columncolor{c3}}c}{17.86\scriptsize$\pm$0.16}} & \makecell{\multicolumn{1}{>{\columncolor{c3}}c}{13.41\scriptsize$\pm$0.09}} \\
            \midrule
            
            \multicolumn{10}{c}{\text{Other HtFL Methods}} \\
            \midrule
            \makecell{FedGen} &  \makecell{84.42\scriptsize$\pm$0.34} & \makecell{61.63\scriptsize$\pm$0.31} & \makecell{54.54\scriptsize$\pm$0.14} & \makecell{40.64\scriptsize$\pm$0.12} & \makecell{23.20\scriptsize$\pm$0.21} & \makecell{17.86\scriptsize$\pm$0.34} &  \makecell{30.86\scriptsize$\pm$0.22} & \makecell{14.57\scriptsize$\pm$0.06} & 
            \makecell{10.65\scriptsize$\pm$0.15} \\
            \makecell{FedGH} &  \makecell{83.59\scriptsize$\pm$0.16} & \makecell{60.87\scriptsize$\pm$0.26} & \makecell{54.83\scriptsize$\pm$0.06} & \makecell{40.42\scriptsize$\pm$0.17} & \makecell{22.68\scriptsize$\pm$0.21} & \makecell{17.64\scriptsize$\pm$0.14} &  \makecell{29.95\scriptsize$\pm$0.23} & \makecell{12.23\scriptsize$\pm$0.18} & 
            \makecell{\; 9.12\scriptsize$\pm$0.04} \\
            \makecell{LG-FedAvg} &  \makecell{84.53\scriptsize$\pm$0.13} & \makecell{61.31\scriptsize$\pm$0.09} & \makecell{55.26\scriptsize$\pm$0.11} & \makecell{42.56\scriptsize$\pm$0.23} & \makecell{25.03\scriptsize$\pm$0.16} & \makecell{19.25\scriptsize$\pm$0.24} &  \makecell{32.30\scriptsize$\pm$0.19} & \makecell{15.75\scriptsize$\pm$0.06} & 
            \makecell{11.79\scriptsize$\pm$0.05} \\
            \makecell{FML} &  \makecell{86.79\scriptsize$\pm$0.18} & \makecell{63.53\scriptsize$\pm$0.34} & \makecell{57.09\scriptsize$\pm$0.13} & \makecell{42.49\scriptsize$\pm$0.36} & \makecell{25.32\scriptsize$\pm$0.17} & \makecell{19.46\scriptsize$\pm$0.12} &  \makecell{32.43\scriptsize$\pm$0.07} & \makecell{\multicolumn{1}{>{\columncolor{c2}}c}{17.99\scriptsize$\pm$0.05}} & \makecell{\multicolumn{1}{>{\columncolor{c2}}c}{13.85\scriptsize$\pm$0.03}} \\
            \makecell{FedDistill} &  \makecell{85.93\scriptsize$\pm$0.11} & \makecell{62.35\scriptsize$\pm$0.15} & \makecell{55.75\scriptsize$\pm$0.14} & \makecell{42.47\scriptsize$\pm$0.16} & \makecell{\multicolumn{1}{>{\columncolor{c3}}c}{25.50\scriptsize$\pm$0.09}} & \makecell{\multicolumn{1}{>{\columncolor{c3}}c}{19.55\scriptsize$\pm$0.02}} &  \makecell{32.22\scriptsize$\pm$0.32} & \makecell{17.68\scriptsize$\pm$0.09} & \makecell{13.07\scriptsize$\pm$0.08} \\
            \makecell{FedKD} &  \makecell{\multicolumn{1}{>{\columncolor{c3}}c}{86.80\scriptsize$\pm$0.15}} & \makecell{63.86\scriptsize$\pm$0.16} & \makecell{57.05\scriptsize$\pm$0.40} & \makecell{42.54\scriptsize$\pm$0.49} & \makecell{24.78\scriptsize$\pm$0.10} & \makecell{19.20\scriptsize$\pm$0.16} &  \makecell{\multicolumn{1}{>{\columncolor{c3}}c}{32.79\scriptsize$\pm$0.08}} & \makecell{17.80\scriptsize$\pm$0.09} & \makecell{13.72\scriptsize$\pm$0.04} \\
            \makecell{FedMRL} &  \makecell{86.70\scriptsize$\pm$0.19} & \makecell{\multicolumn{1}{>{\columncolor{c3}}c}{64.06\scriptsize$\pm$0.14}} & \makecell{\multicolumn{1}{>{\columncolor{c3}}c}{57.11\scriptsize$\pm$0.30}} & \makecell{\multicolumn{1}{>{\columncolor{c3}}c}{42.82\scriptsize$\pm$0.37}} & \makecell{24.87\scriptsize$\pm$0.23} & \makecell{19.25\scriptsize$\pm$0.29} &  \makecell{31.43\scriptsize$\pm$0.22} & \makecell{16.22\scriptsize$\pm$0.14} & \makecell{12.48\scriptsize$\pm$0.19} \\			
			\bottomrule
		\end{tabular}
	\end{center}
\end{table*}
\textbf{1) Datasets.}
We evaluate \methodname{} on both vision and language benchmarks. 
For image classification, we use CIFAR-10 \cite{Cifar10}, CIFAR-100 \cite{Cifar100}, Tiny-ImageNet \cite{Tiny}, PACS \cite{PACS}, and DomainNet \cite{DomainNet}. 
For text classification, we use AG NEWS \cite{AGNEWSandAmazon}, Amazon Review \cite{AGNEWSandAmazon}, and Shakespeare \cite{Shakespeare}.

\textbf{2) Data Heterogeneity.}
We consider three types of non-IID settings: label-shift, feature-shift, and real-world non-IID.
For label-shift non-IID, we adopt the widely used Dirichlet partitioning scheme, where each client's data is sampled from a Dirichlet distribution $\mathrm{Dir}(\alpha)$. The hyperparameter $\alpha$ controls the degree of data heterogeneity, with a smaller $\alpha$ indicating a higher level of non-IID.
For feature-shift non-IID, we use cross-domain datasets including PACS, DomainNet, and Amazon Review, and assign one domain to each client.
For real-world non-IID, we use the Shakespeare dataset, whose user-wise partitions are naturally non-IID.

\textbf{3) Model Heterogeneity.}
Following HtFLlib \cite{HtFLlib}, we simulate model heterogeneity by instantiating a set of model architectures and assigning them to different clients.
We denote a heterogeneous configuration as ``$\text{HtFE}_X$'' or ``$\text{HtM}_X$'', where ``FE'' means that only the feature extractors are heterogeneous, ``M'' means the entire models are heterogeneous, and ``$X$'' is the number of distinct architectures. 
Each client $i$ is assigned the $(i \bmod X)$-th architecture. 
For image tasks, we consider five heterogeneous settings: $\text{HtFE}_2^{img}$, $\text{HtFE}_4^{img}$, $\text{HtFE}_9^{img}$, $\text{HtM}_4^{img}$, and $\text{HtM}_{10}^{img}$. 
For text tasks, we use $\text{HtFE}_6^{txt}$. 
Unless otherwise specified, $\text{HtFE}_9^{img}$ is used as the default heterogeneous setting.

\textbf{4) Comparison Methods.}
We compare \methodname{} with representative HtFL methods from multiple categories: (1) prototype-based methods including FedProto \cite{FedProto}, FedTGP \cite{FedTGP}, and AlignFed \cite{AlignFed}; (2) knowledge distillation methods including FedGen \cite{FedGen}, FedDistill \cite{FedDistill}; (3) mutual learning methods including FML \cite{FML}, FedMRL \cite{FedMRL}, and FedKD \cite{FedKD}; and (4) partial parameter sharing methods including LG-FedAvg \cite{LG-FedAvg} and FedGH \cite{FedGH}. All implementations are based on HtFLlib \cite{HtFLlib} with identical hyperparameter settings.

\textbf{5) Training Protocol and Metrics.}
By default, we conduct experiments in a cross-silo setting with $N=20$ clients, and all clients participate in every communication round. 
The batch size is set to $100$ and the number of local epochs is $E=5$. 
For state-of-the-art comparisons, we run each method for $300$ global rounds to ensure convergence. 
At each round, we compute the average test accuracy across all clients and report the best accuracy over all rounds. 
For \methodname{}, we adopt the vanilla aggregation-based prototype construction (\ie, FedProto) by default and report two instantiations of the proposed structural alignment framework, denoted \methodname-GCSA and \methodname-RCSA, corresponding to the two structural alignment losses described in Section~\ref{sec:structural alignment}.

\subsection{Comparison with State-of-the-Art Methods}\label{sec:compare with sota}
In this subsection, we compare the proposed \methodname{} with representative state-of-the-art (SOTA) HtFL methods under diverse data- and model-heterogeneity settings to systematically evaluate the effectiveness and performance advantages of our framework.

\textbf{1) Performance under Different Data Heterogeneity Scenarios.}
Table~\ref{tab:effect of data hetero} reports results on CIFAR-10/100 and Tiny-ImageNet under Dirichlet label shift with HtFE$_9$. Across all $\alpha$ values, \methodname-GCSA/RCSA consistently outperform prototype-based baselines (FedProto, FedTGP, AlignFed) while retaining the vanilla global prototype construction. This indicates that, under heterogeneous feature spaces, improving how prototypes are aligned can be more effective than further complicating how prototypes are constructed.

Compared with other HtFL baselines that rely on additional auxiliary models or distillation modules, our method attains competitive, and in several cases superior, accuracy while preserving the communication and computation efficiency of prototype-based schemes. By improving the alignment mechanism rather than the prototype construction pipeline, our framework closes much of the performance gap to heavier HtFL methods while maintaining the lightweight communication pattern that makes prototype-based approaches attractive in practice.

\definecolor{c1}{RGB}{189,230,205}
\definecolor{c2}{RGB}{228,238,188}
\definecolor{c3}{RGB}{255,248,197}
\definecolor{c4}{RGB}{238,238,238}
\begin{table}[tb]
\setlength\tabcolsep{1.5pt}
	\begin{center}
 \scriptsize
 \caption{Test accuracy (\%) of different methods under Dir(0.1) partition on CIFAR-10, CIFAR-100, and Tiny ImageNet across various model heterogeneity settings. The top three results are highlighted as \colorbox{c1}{\textbf{first}}, \colorbox{c2}{second}, and \colorbox{c3}{third}, respectively.}
 \label{tab:effect of model hetero}
		\begin{tabular}{@{}ccccccc@{}}
			\toprule
			&  \multicolumn{2}{c}{CIFAR-10}             &  \multicolumn{2}{c}{CIFAR-100}              & \multicolumn{2}{c}{Tiny ImageNet}           \\ \midrule
			Methods &  $\text{HtFE}_2$ &  $\text{HtFE}_4$ &  $\text{HtFE}_2$ &  $\text{HtFE}_4$ &  $\text{HtFE}_2$ &  $\text{HtFE}_4$ \\ \midrule
            \multicolumn{7}{c}{Prototype-based HtFL Methods} \\
            \midrule
            \makecell{FedProto} &  \makecell{87.36\tiny$\pm$0.15} & \makecell{86.89\tiny$\pm$0.07} &  \makecell{44.99\tiny$\pm$0.10} & \makecell{42.14\tiny$\pm$0.13} &  \makecell{31.70\tiny$\pm$0.12} & \makecell{\multicolumn{1}{>{\columncolor{c3}}c}{32.40\tiny$\pm$0.11}} \\
            \makecell{FedTGP} &  \makecell{\multicolumn{1}{>{\columncolor{c3}}c}{88.06\tiny$\pm$0.13}} & \makecell{\multicolumn{1}{>{\columncolor{c3}}c}{87.49\tiny$\pm$0.25}} &  \makecell{45.81\tiny$\pm$0.13} & \makecell{\multicolumn{1}{>{\columncolor{c3}}c}{42.22\tiny$\pm$0.10}} &  \makecell{31.74\tiny$\pm$0.10} & \makecell{31.98\tiny$\pm$0.14} \\
            \makecell{AlignFed} &  \makecell{87.04\tiny$\pm$0.06} & \makecell{85.51\tiny$\pm$0.09} &  \makecell{43.43\tiny$\pm$0.03} & \makecell{41.53\tiny$\pm$0.13} &  \makecell{29.80\tiny$\pm$0.11} & \makecell{29.42\tiny$\pm$0.18} \\
            \midrule
            \makecell{\textbf{Ours-GCSA}}  &  \makecell{\multicolumn{1}{>{\columncolor{c1}}c}{\textbf{88.90\tiny$\pm$0.13}}} & \makecell{\multicolumn{1}{>{\columncolor{c2}}c}{\text{88.71\tiny$\pm$0.18}}} &  \makecell{\multicolumn{1}{>{\columncolor{c2}}c}{\text{46.64\tiny$\pm$0.15}}} & \makecell{\multicolumn{1}{>{\columncolor{c2}}c}{\text{44.92\tiny$\pm$0.18}}} & \makecell{\multicolumn{1}{>{\columncolor{c2}}c}{32.99\tiny$\pm$0.11}} & \makecell{\multicolumn{1}{>{\columncolor{c2}}c}{\text{32.72\tiny$\pm$0.21}}}  \\
            \makecell{\textbf{Ours-RCSA}}  &  \makecell{\multicolumn{1}{>{\columncolor{c2}}c}{\text{88.88\tiny$\pm$0.17}}} & \makecell{\multicolumn{1}{>{\columncolor{c1}}c}{\textbf{88.95\tiny$\pm$0.16}}} & \makecell{\multicolumn{1}{>{\columncolor{c1}}c}{\textbf{46.67\tiny$\pm$0.14}}} &  \makecell{\multicolumn{1}{>{\columncolor{c1}}c}{\textbf{45.19\tiny$\pm$0.11}}} & \makecell{\multicolumn{1}{>{\columncolor{c1}}c}{\textbf{33.13\tiny$\pm$0.09}}} & \makecell{\multicolumn{1}{>{\columncolor{c1}}c}{\textbf{32.93\tiny$\pm$0.17}}}\\
            \midrule
            \multicolumn{7}{c}{Other HtFL Methods} \\
            \midrule
            \makecell{FedGen} &  \makecell{85.18\tiny$\pm$0.12} & \makecell{84.77\tiny$\pm$0.08} &  \makecell{41.56\tiny$\pm$0.11} & \makecell{39.64\tiny$\pm$0.19} &  \makecell{29.28\tiny$\pm$0.03} & \makecell{29.77\tiny$\pm$0.21} \\
            \makecell{FedGH} &  \makecell{83.46\tiny$\pm$0.24} & \makecell{85.26\tiny$\pm$0.22} &  \makecell{41.81\tiny$\pm$0.14} & \makecell{40.26\tiny$\pm$0.19} &  \makecell{29.90\tiny$\pm$0.05} & \makecell{30.44\tiny$\pm$0.18} \\
            \makecell{LG-FedAvg} &  \makecell{85.64\tiny$\pm$0.05} & \makecell{84.95\tiny$\pm$0.22} &  \makecell{43.15\tiny$\pm$0.06} & \makecell{40.83\tiny$\pm$0.15} &  \makecell{30.09\tiny$\pm$0.05} & \makecell{30.29\tiny$\pm$0.11} \\
            \makecell{FML} &  \makecell{87.12\tiny$\pm$0.05} & \makecell{86.73\tiny$\pm$0.08} &  \makecell{42.25\tiny$\pm$0.09} & \makecell{40.01\tiny$\pm$0.21} &  \makecell{29.77\tiny$\pm$0.15} & \makecell{30.77\tiny$\pm$0.20} \\
            \makecell{FedDistill} &  \makecell{87.32\tiny$\pm$0.03} & \makecell{86.86\tiny$\pm$0.14} &  \makecell{43.46\tiny$\pm$0.09} & \makecell{42.14\tiny$\pm$0.17} &  \makecell{31.05\tiny$\pm$0.07} & \makecell{32.01\tiny$\pm$0.14} \\
            \makecell{FedKD} &  \makecell{87.22\tiny$\pm$0.17} & \makecell{86.38\tiny$\pm$0.19} &  \makecell{\multicolumn{1}{>{\columncolor{c3}}c}{45.99\tiny$\pm$0.09}} & \makecell{34.72\tiny$\pm$0.29} &  \makecell{\multicolumn{1}{>{\columncolor{c3}}c}{32.95\tiny$\pm$0.07}} & \makecell{{27.56\tiny$\pm$0.23}} \\
            \makecell{FedMRL} &  \makecell{87.51\tiny$\pm$0.29} & \makecell{87.00\tiny$\pm$0.09} &  \makecell{43.63\tiny$\pm$0.21} & \makecell{{41.31\tiny$\pm$0.19}} &  \makecell{29.02\tiny$\pm$0.20} & \makecell{28.98\tiny$\pm$0.15} \\        
			
			\bottomrule
		\end{tabular}
	\end{center}
\end{table}
\textbf{2) Performance under Different Model Heterogeneity Scenarios.}
Table~\ref{tab:effect of model hetero} fixes Dir$(0.1)$ and varies heterogeneity from $\text{HtFE}_2$ to $\text{HtFE}_4$. Prototype-based baselines become weaker under more severe heterogeneity, but \methodname{} remains the top performer within this family and is consistently competitive against heavier distillation and mutual-learning methods. Notably, as heterogeneity increases, the degradation of \methodname{} is smaller than coordinate-alignment baselines, corroborating our motivation that rigid coordinate matching becomes increasingly restrictive as client feature spaces diverge.

\subsection{Comparison with Coordinate Alignment}\label{sec:compare_alignment_expe}
To isolate the effect of the alignment objective, we conduct controlled comparisons where all components (client models, training protocol, and global prototype construction) are held identical, and only the local alignment loss is varied. We compare two coordinate-alignment baselines (MSE and cosine matching) with two structural instantiations (GCSA and RCSA).

\begin{table}[tb]
\setlength\tabcolsep{1.5pt}
	\begin{center}
 \scriptsize
 \caption{Test accuracy (\%) of different feature alignment loss under label shift non-IID partition on CIFAR-10 and CIFAR-100.}
 \label{tab:alignment diff in label shift}
		\begin{tabular}{@{}ccccccc@{}}
			\toprule
			&  \multicolumn{3}{c}{CIFAR-10}             &  \multicolumn{3}{c}{CIFAR-100}  \\ 
            \midrule
			Methods &  $\alpha=0.1$ & $\alpha=0.5$ & $\alpha=1.0$ &  $\alpha=0.1$ & $\alpha=0.5$ & $\alpha=1.0$ \\ \midrule
            \makecell{MSE} &  \makecell{86.40\tiny$\pm$0.11} & \makecell{62.51\tiny$\pm$0.16} &  \makecell{55.10\tiny$\pm$0.17} & \makecell{41.95\tiny$\pm$0.16} &  \makecell{25.26\tiny$\pm$0.21} & \makecell{19.54\tiny$\pm$0.25} \\
            \makecell{Cosine} &  \makecell{86.41\tiny$\pm$0.15} & \makecell{62.69\tiny$\pm$0.11} &  \makecell{55.73\tiny$\pm$0.13} & \makecell{42.58\tiny$\pm$0.21} &  \makecell{25.55\tiny$\pm$0.27} & \makecell{19.97\tiny$\pm$0.19} \\
            \makecell{GCSA} &   \makecell{88.03\tiny$\pm$0.13} & \makecell{65.95\tiny$\pm$0.21} &  \makecell{57.29\tiny$\pm$0.18} & \makecell{44.06\tiny$\pm$0.16} &  \makecell{26.83\tiny$\pm$0.35} & \makecell{21.04\tiny$\pm$0.22} \\
            \makecell{RCSA} &   \makecell{88.06\tiny$\pm$0.24} & \makecell{65.63\tiny$\pm$0.19} &  \makecell{57.57\tiny$\pm$0.32} & \makecell{44.26\tiny$\pm$0.11} &  \makecell{26.97\tiny$\pm$0.24} & \makecell{20.83\tiny$\pm$0.29} \\
			\bottomrule
		\end{tabular}
	\end{center}
\end{table}

\textbf{1) Performance under Label Shift non-IID Scenarios.}
We first evaluate the four alignment objectives under label-shift non-IID settings with varying levels of data heterogeneity. The results on CIFAR-10 and CIFAR-100 with HtFE$_9$ are reported in Table~\ref{tab:alignment diff in label shift}.

Both structural alignment variants (GCSA and RCSA) consistently and substantially outperform coordinate alignment baselines (MSE and Cosine) across all datasets and heterogeneity levels. On CIFAR-10, our methods improve over the best coordinate baseline by up to 3.26\% ($\alpha=0.5$), with an average gain of 2.25\% across all settings. On CIFAR-100, the improvement reaches 1.48\% ($\alpha=0.1$), with an average gain of 1.32\%. 
We also observe that the advantage of structural alignment is more pronounced under moderate data heterogeneity ($\alpha=0.5$) than under extreme heterogeneity ($\alpha=0.1$). A plausible explanation is that, under severe label imbalance, the limited class coverage per client restricts the expressiveness of inter-class structural relations, partially diminishing the benefit of structure-level alignment. Nevertheless, even in this challenging regime, structural alignment maintains a clear margin over coordinate-based methods.

\begin{table*}[htb]
\centering
\caption{Experimental results on PACS and DomainNet datasets. The best and second-best results are highlighted in \textbf{bold} and \underline{underline}, respectively.}
\label{tab:alignment feature shift}
\small
\begin{tabular}{@{}lcccccccccccc@{}}
\toprule
 & \multicolumn{5}{c}{PACS} & \multicolumn{7}{c}{DomainNet} \\ 
\cmidrule(lr){2-6} \cmidrule(lr){7-13}
Method & Art & Cart. & Photo & Sketch & \textbf{Avg.} & Clip. & Info. & Paint. & Quick. & Real & Sketch & \textbf{Avg.} \\ \midrule
MSE    & 64.11 & \underline{49.80} & 61.95 & 63.68 & 59.89 & 8.21 & 2.08 & 3.23 & 41.34 & 13.86 & 2.50 & 11.87 \\
Cosine & 61.96 & 46.48 & \textbf{65.70} & 64.50 & 59.66 & \textbf{9.49} & 2.09 & 3.20 & 39.60 & 14.18 & 2.75 & \underline{11.89} \\
GCSA   & \underline{64.59} & \textbf{51.37} & 64.16 & \textbf{66.12} & \textbf{61.56} & \underline{9.28} & \underline{2.64} & \textbf{5.98} & \underline{41.83} & \underline{16.80} & \underline{4.84} & 13.56 \\
RCSA   & \textbf{64.92} & 49.27 & \underline{64.31} & \underline{65.51} & \underline{61.00} & 8.28 & \textbf{3.64} & \underline{5.11} & \textbf{45.56} & \textbf{22.29} & \textbf{5.97} & \textbf{15.14} \\ \bottomrule
\end{tabular}
\end{table*}
\textbf{2) Performance under Feature Shift non-IID Scenarios.}
Beyond label shift, we further evaluate structural alignment under feature shift non-IID settings, where data heterogeneity arises from domain discrepancy rather than class imbalance. We conduct experiments on two cross-domain benchmarks: PACS (4 domains) and DomainNet (6 domains), assigning one domain to each client. The results are reported in Table~\ref{tab:alignment feature shift}.

On PACS, both GCSA and RCSA outperform coordinate alignment baselines in terms of average accuracy, with GCSA achieving 61.56\% and RCSA achieving 61.00\%, compared to 59.89\% for MSE and 59.66\% for Cosine. The improvements are consistent across most individual domains, with GCSA obtaining the best performance on Cartoon and Sketch, and RCSA achieving the highest accuracy on Art.
On DomainNet, the advantage of structural alignment becomes even more pronounced. RCSA achieves the best average accuracy of 15.14\%, substantially outperforming MSE (11.87\%) and Cosine (11.89\%) by over 3\%. The gains are particularly notable on challenging domains such as Real, where RCSA improves over Cosine by 8.11\% (from 14.18\% to 22.29\%), and Sketch, where the improvement reaches 3.22\% (from 2.75\% to 5.97\%). GCSA also demonstrates strong performance, achieving the best results on Painting and competitive accuracy on other domains.

These results indicate that structural alignment is especially beneficial when clients exhibit significant feature distribution shifts. In such scenarios, heterogeneous models trained on visually distinct domains naturally develop diverse feature representations. Coordinate alignment, which forces all clients into a shared global feature space, struggles to accommodate these domain-induced variations. In contrast, structural alignment focuses on preserving inter-class relational geometry, allowing each client to maintain domain-specific feature characteristics while still benefiting from global prototype knowledge.

\begin{table}[tb]
	\begin{center}
\footnotesize
 \caption{Test accuracy (\%) of different feature alignment loss under different non-IID settings in textual modality.}
 \label{tab:alignment diff in textual modality}
		\begin{tabular}{@{}cccc@{}}
			\toprule
			Datasets &  AG News             &  Amazon Review & Shakespeare  \\ 
            \midrule
			Scenarios &  Label Shift & Feature Shift & Real-World \\ \midrule
            \makecell{MSE} &  \makecell{52.49\tiny$\pm$0.09} & \makecell{87.73\tiny$\pm$0.03} &  \makecell{43.41\tiny$\pm$0.29}  \\
            \makecell{Cosine} &  \makecell{54.17\tiny$\pm$0.18} & \makecell{87.68\tiny$\pm$0.05} &  \makecell{44.04\tiny$\pm$0.16} \\
            \makecell{GCSA} &  \makecell{57.49\tiny$\pm$0.16} & \makecell{88.15\tiny$\pm$0.03} &  \makecell{45.26\tiny$\pm$0.16} \\
            \makecell{RCSA} &    \makecell{57.48\tiny$\pm$0.21} & \makecell{88.56\tiny$\pm$0.08} &  \makecell{45.75\tiny$\pm$0.21} \\
			\bottomrule
		\end{tabular}
	\end{center}
\end{table}

\textbf{3) Performance in Textual Modality.}
To verify the generality of structural alignment beyond visual tasks, we conduct experiments on three text classification benchmarks covering diverse non-IID scenarios: AG News (label shift), Amazon Review (feature shift across product categories), and Shakespeare (real-world non-IID with natural user partitions). The results are reported in Table~\ref{tab:alignment diff in textual modality}.
Structural alignment consistently outperforms coordinate alignment across all three datasets and non-IID types. On AG News, GCSA and RCSA achieve 57.49\% and 57.48\% respectively, improving over the best coordinate baseline (Cosine, 54.17\%) by 3.31\%. On Amazon Review, RCSA obtains 88.56\%, surpassing MSE (87.73\%) and Cosine (87.68\%) by approximately 0.8\%. On the Shakespeare dataset, which exhibits naturally heterogeneous data distributions arising from different writing styles of characters, RCSA achieves the best accuracy of 45.75\%, outperforming MSE by 2.34\% and Cosine by 1.71\%.
These results demonstrate that the advantage of structural alignment extends beyond image classification to textual modality as well.

\textbf{4) Performance in Cross-device Scenarios.}
Previous experiments focus on cross-silo settings with a moderate number of clients. We now evaluate structural alignment in cross-device scenarios, where the number of clients is significantly larger and each client holds fewer local samples. To simulate the cross-device setting, we set the client number to 50, 100, and 200, with 20\% of clients participating in each round. Considering the limited computational capacity of mobile devices, we set the batch size to 10 and the number of local epochs to 1. The results on CIFAR-100 under Dir(0.1) with HtFE$_9$ are reported in Table~\ref{tab:cross device}.

Structural alignment consistently outperforms coordinate alignment across all client scales. RCSA achieves the best performance in all settings, improving over Cosine by 3.29\% with 50 clients, 2.24\% with 100 clients, and 1.17\% with 200 clients. As the number of clients increases and local data becomes scarcer, all methods experience performance degradation. However, structural alignment maintains a clear advantage throughout, demonstrating robustness under increased client counts and partial participation. These results suggest that structural alignment is well-suited for large-scale cross-device deployments.
\begin{table}[]
	\begin{center}
    \caption{Test accuracy (\%) of different methods in cross-device scenarios on CIFAR-100.}
 \label{tab:cross device}
		\begin{tabular}{cccc}
			\toprule
			Client \# & 50 & 100 & 200 \\
            \midrule
            \makecell{MSE} & \makecell{40.06\scriptsize$\pm$0.18} & \makecell{38.43\scriptsize$\pm$0.08} & \makecell{34.50\scriptsize$\pm$0.10}  \\
            \makecell{Cosine} & \makecell{40.24\scriptsize$\pm$0.18} & \makecell{38.53\scriptsize$\pm$0.11} & \makecell{34.89\scriptsize$\pm$0.17}  \\
            \makecell{GCSA} & \makecell{43.17\scriptsize$\pm$0.16} & \makecell{40.47\scriptsize$\pm$0.10} & \makecell{35.35\scriptsize$\pm$0.17}  \\
            \makecell{RCSA} & \makecell{43.53\scriptsize$\pm$0.13} & \makecell{40.77\scriptsize$\pm$0.06} & \makecell{36.06\scriptsize$\pm$0.10}  \\
			
			\bottomrule
		\end{tabular}
	\end{center}
\end{table}

\begin{table*}[htb]
\setlength\tabcolsep{3pt}
	\begin{center}
 \caption{Test accuracy (\%) of different feature alignment loss under label shift non-IID partition on CIFAR-10 and CIFAR-100 with different model structure.}
 \label{tab:alignment diff in model hetero}
		\begin{tabular}{@{}ccccccccc@{}}
			\toprule
			&  \multicolumn{4}{c}{CIFAR-10}             &  \multicolumn{4}{c}{CIFAR-100}  \\ 
            \midrule
			Methods &  HtFE$_2$ & HtFE$_4$ & HtM$_4$ &  HtM$_{10}$ &  HtFE$_2$ & HtFE$_4$ & HtM$_4$ &  HtM$_{10}$ \\ \midrule
            \makecell{MSE} &  \makecell{87.87\scriptsize$\pm$0.13} & \makecell{87.47\scriptsize$\pm$0.15} &  \makecell{86.59\scriptsize$\pm$0.08} & \makecell{85.55\scriptsize$\pm$0.06} &  \makecell{46.44\scriptsize$\pm$0.12} & \makecell{43.18\scriptsize$\pm$0.21} &  \makecell{40.64\scriptsize$\pm$0.17} & \makecell{38.45\scriptsize$\pm$0.14} \\
            \makecell{Cosine} &  \makecell{87.67\scriptsize$\pm$0.16} & \makecell{87.42\scriptsize$\pm$0.09} &  \makecell{86.41\scriptsize$\pm$0.13} & \makecell{85.91\scriptsize$\pm$0.05} &  \makecell{44.10\scriptsize$\pm$0.10} & \makecell{43.04\scriptsize$\pm$0.07} &  \makecell{39.99\scriptsize$\pm$0.11} & \makecell{39.03\scriptsize$\pm$0.15} \\
            \makecell{GCSA} &  \makecell{88.90\scriptsize$\pm$0.13} & \makecell{88.71\scriptsize$\pm$0.18} &  \makecell{87.89\scriptsize$\pm$0.11} & \makecell{87.69\scriptsize$\pm$0.10} &  \makecell{46.63\scriptsize$\pm$0.15} & \makecell{44.92\scriptsize$\pm$0.18} &  \makecell{42.13\scriptsize$\pm$0.08} & \makecell{41.04\scriptsize$\pm$0.09} \\
            \makecell{RCSA} &  \makecell{88.88\scriptsize$\pm$0.17} & \makecell{88.95\scriptsize$\pm$0.16} &  \makecell{87.93\scriptsize$\pm$0.11} & \makecell{87.55\scriptsize$\pm$0.08} &  \makecell{46.67\scriptsize$\pm$0.14} & \makecell{45.19\scriptsize$\pm$0.11} &  \makecell{42.29\scriptsize$\pm$0.10} & \makecell{41.61\scriptsize$\pm$0.06} \\
			\bottomrule
		\end{tabular}
	\end{center}
\end{table*}

\textbf{5) Performance under Different Model Heterogeneity Scenarios.}
We further compare the two alignment paradigms under varying degrees of model heterogeneity. We consider both feature extractor heterogeneity (\ie, HtFE$_2$ and HtFE$_4$) and entire model heterogeneity (\ie, HtM$_4$ and HtM$_{10}$). The results on CIFAR-10 and CIFAR-100 under Dir(0.1) are reported in Table~\ref{tab:alignment diff in model hetero}.

Structural alignment (GCSA and RCSA) consistently outperforms coordinate alignment (MSE and Cosine) across all heterogeneity levels on both datasets. More importantly, the advantage of structural alignment becomes increasingly pronounced as model heterogeneity grows. From HtFE$_2$ to HtM$_{10}$, the performance gap between structural and coordinate alignment widens consistently. This trend directly supports our central hypothesis: as architectural diversity increases, client feature spaces become more distinct, and enforcing coordinate matching becomes increasingly restrictive. Structural alignment, by focusing on inter-class relational geometry rather than absolute coordinates, gracefully accommodates this diversity and scales to highly heterogeneous scenarios.

\begin{figure}
        \centering  
	\subfloat[MSE]{
		\includegraphics[width=0.49\linewidth]{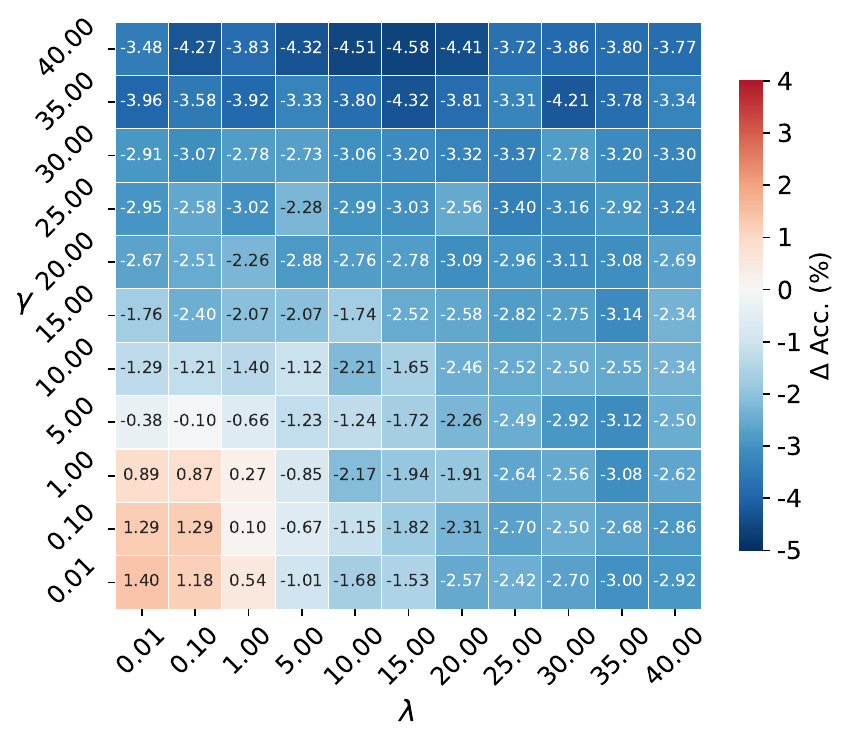}}
    \subfloat[Cosine]{
		\includegraphics[width=0.49\linewidth]{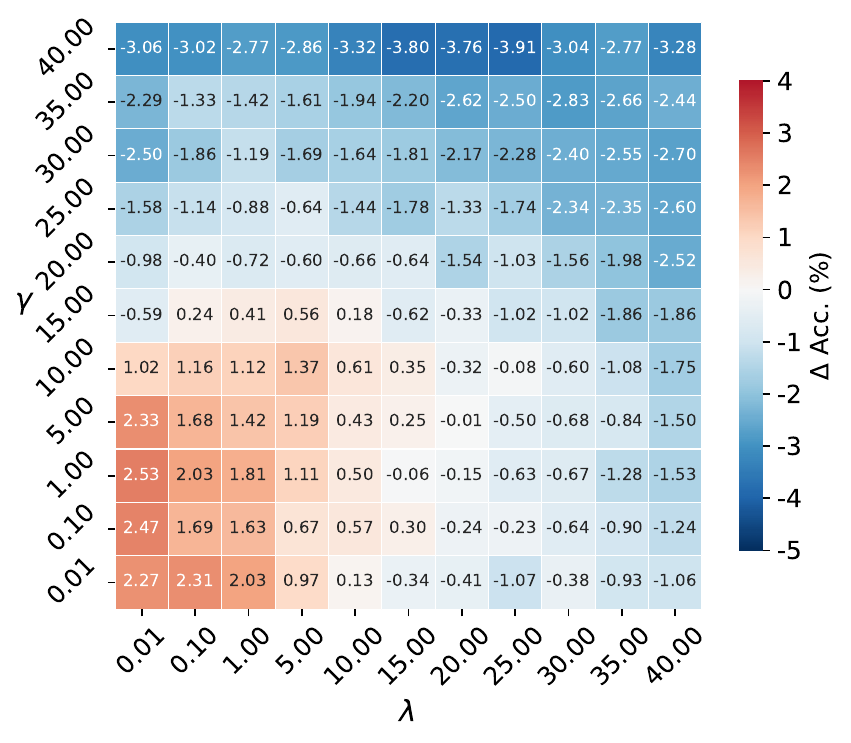}} \\
        \subfloat[GCSA]{
		\includegraphics[width=0.49\linewidth]{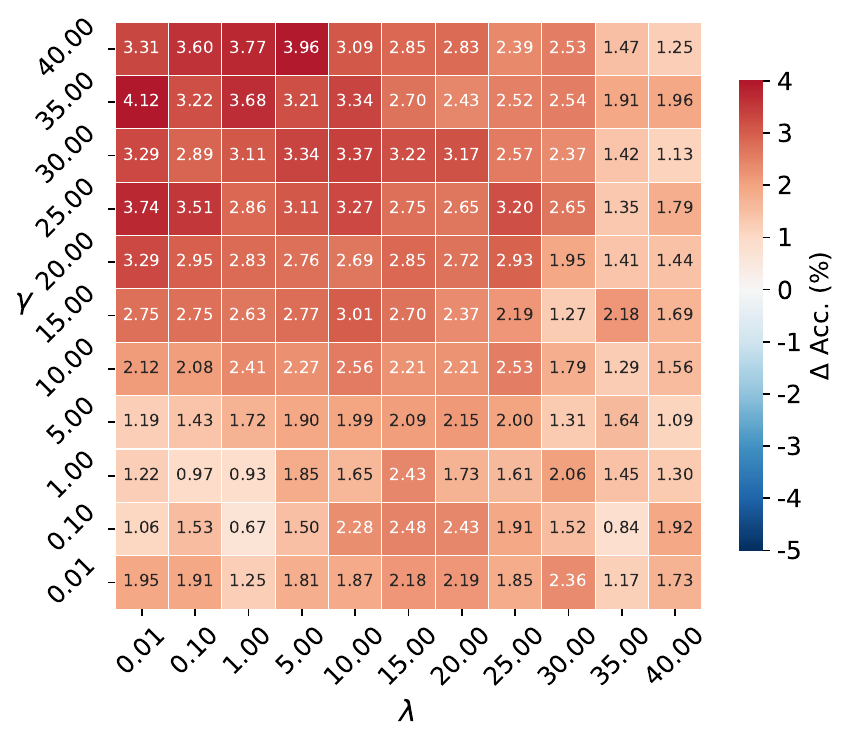}}
    \subfloat[RCSA]{
		\includegraphics[width=0.49\linewidth]{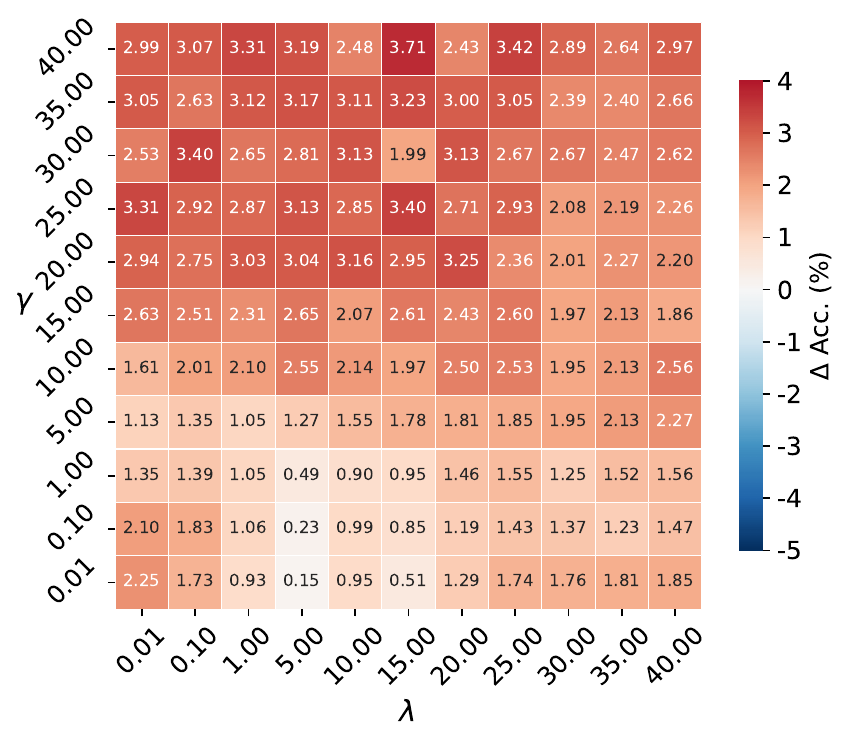}}
	\caption{Accuracy improvement (\%) over the no-alignment baseline ($\lambda=0$, $\gamma=0$) under different combinations of $\lambda$ and $\gamma$ for each alignment method.}
	\label{fig:lambda and gamma}
\end{figure}
\textbf{6) Sensitivity to Hyperparameters.}
We investigate the sensitivity of different alignment methods to hyperparameters $\lambda$ (prototype-level alignment weight) and $\gamma$ (instance-level alignment weight). Fig.~\ref{fig:lambda and gamma} shows the accuracy improvement over the baseline ($\lambda = 0$, $\gamma=0$) across a wide range of hyperparameter combinations on CIFAR-100 with HtFE$_9$ and Dir(0.1).

The heatmaps reveal a striking contrast between coordinate alignment and structural alignment. For MSE and Cosine, positive gains are observed only when both $\lambda$ and $\gamma$ are small. As the alignment strength increases, performance degrades substantially, with large hyperparameter values leading to significant negative transfer (up to -4.58\% for MSE). This phenomenon aligns with our theoretical analysis in Section~\ref{sec:coord-coupling}: when alignment strength is weak, the benefit from structural alignment may outweigh the harm from feature space enforcement. However, as alignment strength increases, the rigid coordinate matching increasingly suppresses model individuality, eventually dominating the overall effect and causing performance degradation. This observation provides empirical evidence that forcing clients into a unified feature space is indeed harmful in HtFL.

In contrast, GCSA and RCSA exhibit consistently positive improvements across nearly all hyperparameter combinations, with gains ranging from approximately 1\% to 4\%. This robustness stems from the fact that structural alignment decouples semantic structure transfer from feature space enforcement, allowing clients to benefit from global prototype knowledge without sacrificing their feature space individuality. Furthermore, the performance remains stable across a broad region around the optimal hyperparameters, indicating strong practical usability without the need for extensive hyperparameter tuning.

\begin{figure}
        \centering  
	\subfloat[CIFAR-10]{
		\includegraphics[width=\linewidth]{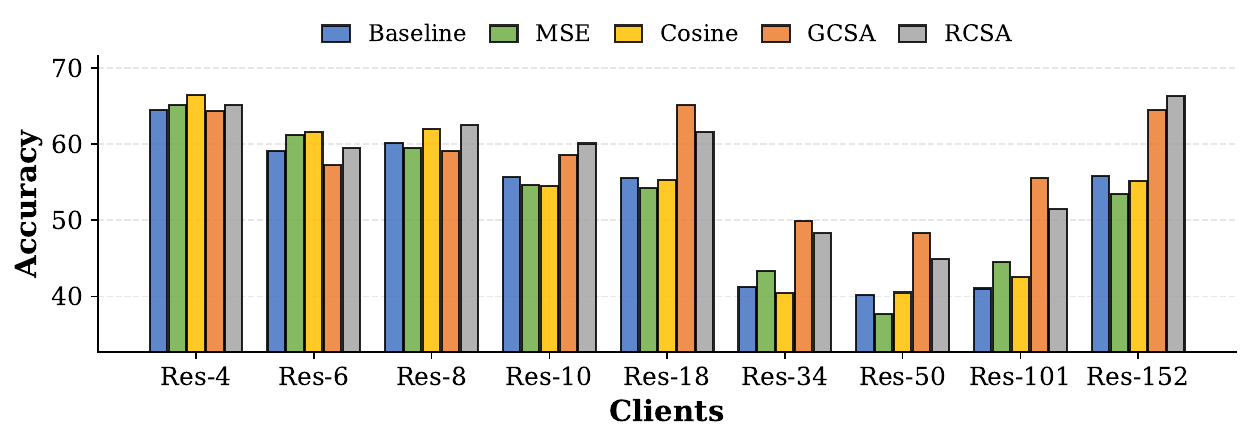}} \\
    \subfloat[CIFAR-100]{
		\includegraphics[width=\linewidth]{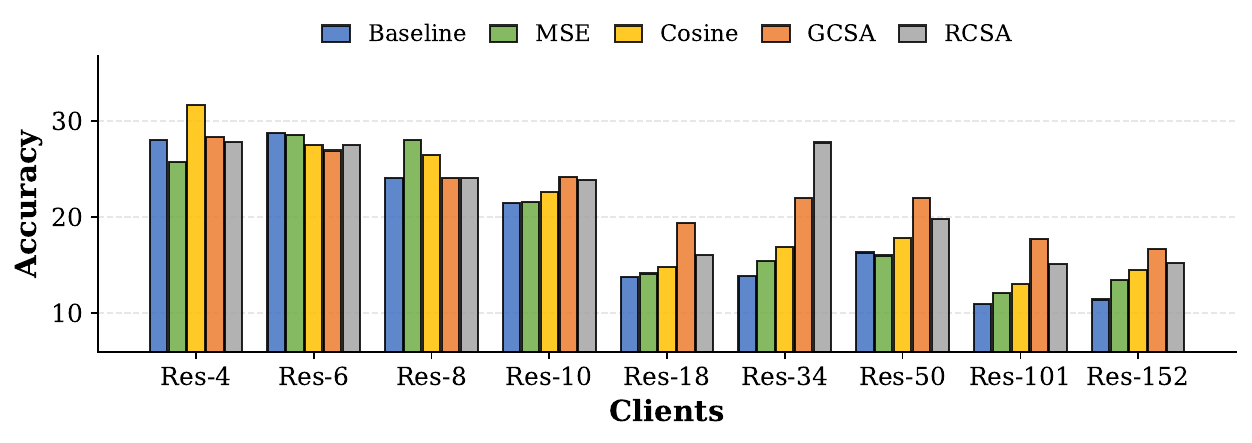}}
	\caption{Per-client test accuracy on CIFAR-10 and CIFAR-100 under Dir(0.1) with HtFE$_9$. Each client is assigned a ResNet variant of different depth.}
	\label{fig:acc per client}
\end{figure}
\textbf{7) Per-Client Performance Analysis.}
To validate our claim that coordinate alignment suppresses model learning capacity in heterogeneous settings, we analyze the performance of each individual client under HtFE$_9$, where nine clients are equipped with ResNet variants of increasing capacity (ResNet-4 to ResNet-152). Fig.~\ref{fig:acc per client} reports the test accuracy of each client on CIFAR-10 and CIFAR-100 under Dir(0.1).
The results reveal a clear pattern. For smaller models (ResNet-4 to ResNet-10), all alignment methods provide similar improvements over the baseline, as these models have limited representation capacity and benefit from any form of global guidance. However, for larger models (ResNet-18 and above), coordinate alignment (MSE and Cosine) yields only marginal gains over the baseline, and in some cases even underperforms it. In contrast, structural alignment (GCSA and RCSA) consistently delivers substantial improvements across all model capacities, with the advantage becoming more pronounced for deeper architectures.
This observation directly supports our motivation: larger models possess stronger representational power and naturally develop richer, more distinctive feature spaces. Forcing these models to match a shared global feature space constrains their expressiveness and prevents them from fully exploiting their capacity. Structural alignment, by targeting only on inter-class relational geometry, allows each client to leverage its architectural strength while still benefiting from global semantic knowledge. These results empirically confirm that preserving model individuality is crucial for effective knowledge transfer in HtFL.

\textbf{8) t-SNE Visualizations for Different Alignment Methods.}
To provide an intuitive understanding of the difference between coordinate alignment and structural alignment, we visualize the local prototypes from nine clients using t-SNE on CIFAR-10 with HtFE$_9$, as shown in Fig.~\ref{fig:prototype_visualization}. Different colors represent different clients, and different markers denote different classes.
The visualization reveals a clear distinction. Under coordinate alignment (MSE), prototypes are grouped by class, with the same class from different clients clustering together. This confirms that coordinate alignment enforces pointwise matching, pulling all clients toward a shared feature space regardless of their architectural differences. In contrast, under structural alignment (GCSA), prototypes are grouped by client, with each client's prototypes forming a distinct cluster. This demonstrates that structural alignment allows each client to optimize within its own feature subspace while maintaining consistent inter-class relations, corroborating our theoretical analysis and the quantitative results presented above.

\begin{figure}
        \centering  
	\subfloat[MSE]{
		\includegraphics[width=0.49\linewidth]{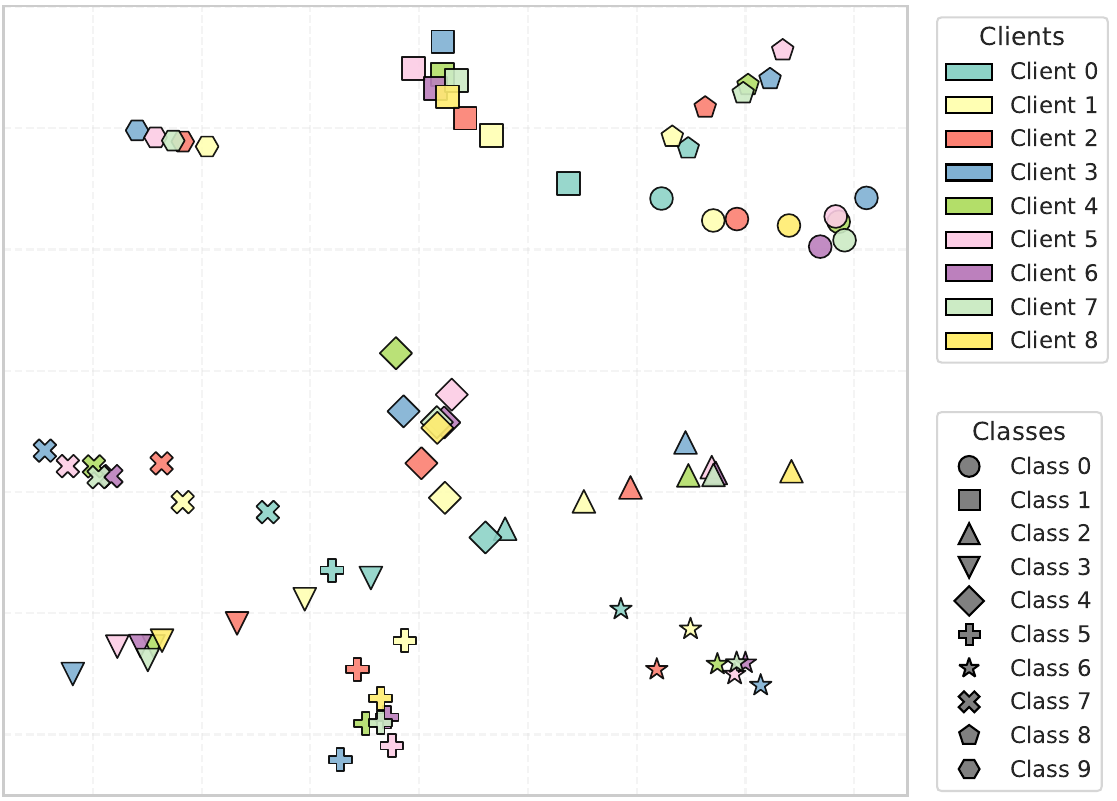}}
    \subfloat[GCSA]{
		\includegraphics[width=0.49\linewidth]{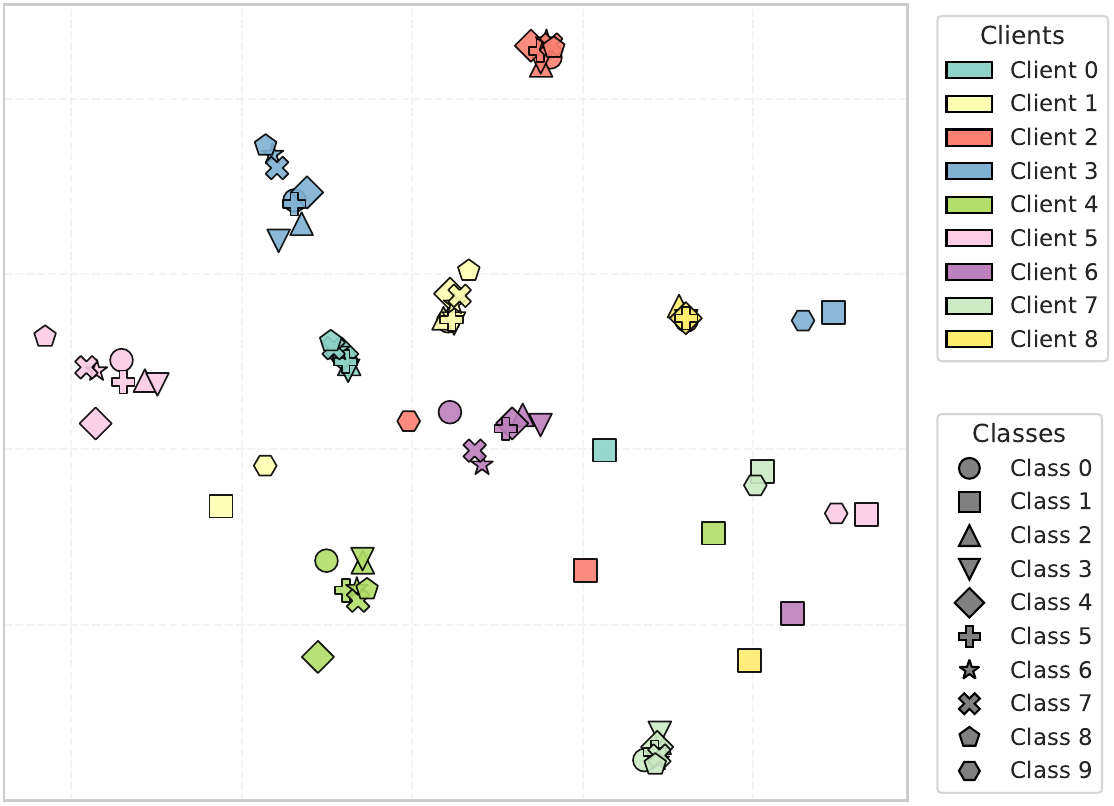}}
	\caption{t-SNE visualization of local prototypes under MSE (coordinate alignment) and GCSA (structural alignment) on CIFAR-10 with HtFE$_9$. Colors denote clients and markers denote classes.}
	\label{fig:prototype_visualization}
\end{figure}

\subsection{Combination with Advanced Global Prototype Construction Methods}


Our structure-level alignment framework is orthogonal to the choice of global prototype construction. In all previous experiments, we deliberately adopted the most basic aggregation-based strategy, where global prototypes are obtained by averaging local prototypes across clients. As discussed in Section~\ref{sec:preliminary}, most existing prototype-based HtFL methods focus on designing stronger global prototypes. In this subsection, we evaluate \methodname{} as a plug-in that can be combined with these advanced construction schemes.

\begin{figure}
        \centering  
	\subfloat[CIFAR-10]{
		\includegraphics[width=0.49\linewidth]{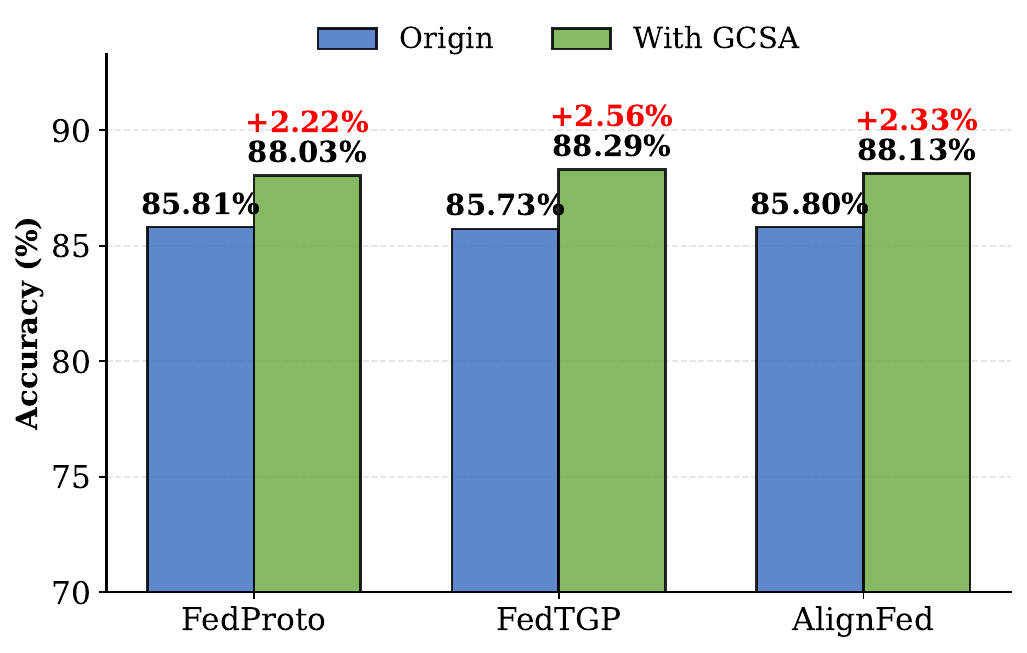}}
    \subfloat[CIFAR-100]{
		\includegraphics[width=0.49\linewidth]{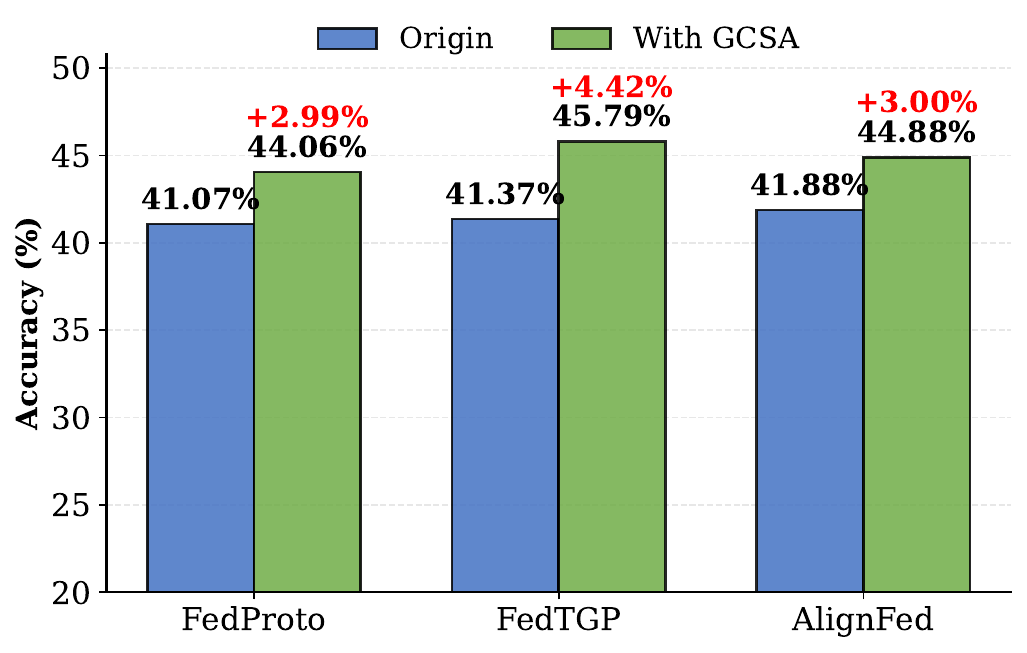}}
	\caption{Test accuracy (\%) of existing methods when combining with GCSA.}
	\label{fig:sota with gcsa}
\end{figure}

Specifically, we keep the global prototype pipelines of FedProto, FedTGP, and AlignFed unchanged, and only replace their original local alignment losses with GCSA. The results on CIFAR-10 and CIFAR-100 are reported in Fig.~\ref{fig:sota with gcsa}. Three key observations emerge from these experiments.

First, structural alignment consistently improves the accuracy of all three prototype-based baselines, confirming that it transfers prototype knowledge to clients more effectively than their original coordinate-level alignment.

Second, the performance gain from integrating GCSA reaches up to 4.42\%, which is substantially larger than the gain from improving prototype construction alone (0.81\%). This indicates that local alignment is a crucial yet largely overlooked component in prototype-based HtFL, and further highlights the practical value of the proposed structure-level alignment framework.

\subsection{Ablation Study}
\begin{table}[t] 
\caption{Experiments on the CIFAR-10 and CIFAR-100 to illustrate the effectiveness of each loss item.}
\label{expe:ablation}
	\begin{center}
				\begin{tabular}{lcccc}
\toprule
Settings & $\mathcal{L}_{\text{proto}}$ & $\mathcal{L}_{\text{inst}}$ & CIFAR-10 & CIFAR-100   \\ \midrule
\uppercase\expandafter{\romannumeral 1} & & & 84.87$\pm$\scriptsize0.15 & 40.55$\pm$\scriptsize0.08  \\
\uppercase\expandafter{\romannumeral 2}& \checkmark & & $86.45\pm$\scriptsize0.11 & 42.47$\pm$\scriptsize0.28 \\
\uppercase\expandafter{\romannumeral 3} &  & \checkmark &  87.02$\pm$\scriptsize0.13 & 43.27$\pm$\scriptsize 0.19 \\
\uppercase\expandafter{\romannumeral 4} & \checkmark & \checkmark  & 88.03$\pm$\scriptsize0.13 & 44.06$\pm$\scriptsize 0.16 \\

\bottomrule
\end{tabular}
	\end{center}
\end{table}
We conduct ablation experiments to evaluate the contribution of each component in our structural alignment framework. We consider four configurations on CIFAR-10 and CIFAR-100 under Dir(0.1) with HtFE$_9$ using GCSA as the structural alignment instantiation. The results are reported in Table~\ref{expe:ablation}.

Comparing Setting I (baseline without any alignment) to Settings II and III, we observe that both $\mathcal{L}_{\text{proto}}$ and $\mathcal{L}_{\text{inst}}$ individually improve performance. On CIFAR-10, $\mathcal{L}_{\text{proto}}$ alone yields a 1.58\% improvement, while $\mathcal{L}_{\text{inst}}$ alone provides a 2.15\% gain. On CIFAR-100, the improvements are 1.92\% and 2.72\%, respectively. This confirms that both prototype-level and instance-level structural alignment contribute positively to knowledge transfer.

Comparing Setting IV to Settings II and III, combining both losses achieves the best performance, with 88.03\% on CIFAR-10 and 44.06\% on CIFAR-100. The two losses provide complementary guidance: $\mathcal{L}_{\text{proto}}$ enforces consistency in class-level semantic geometry, while $\mathcal{L}_{\text{inst}}$ further refines individual sample representations to conform to the global prototype structure. Together, they deliver comprehensive structural guidance at both the class and instance levels.


\subsection{Convergence Speed}
\begin{figure}
        \centering  
	\subfloat[CIFAR-10]{
		\includegraphics[width=0.49\linewidth]{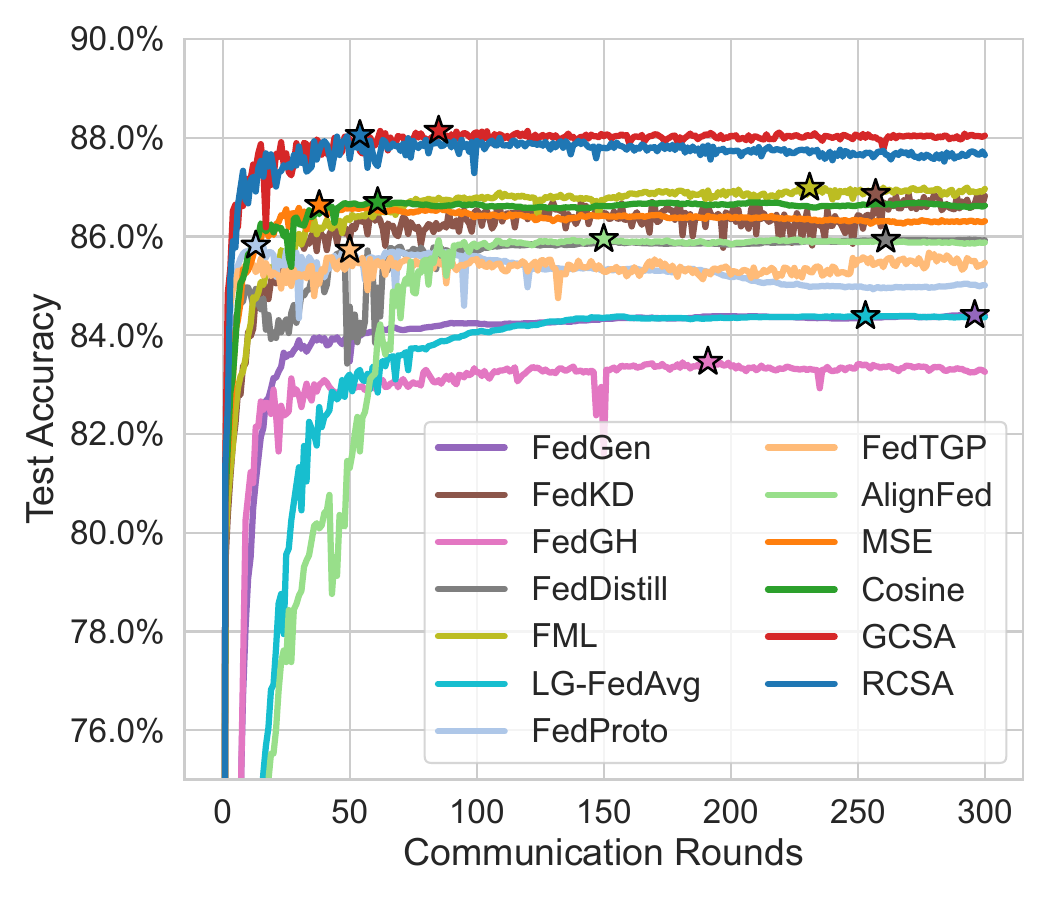}}
    \subfloat[CIFAR-100]{
		\includegraphics[width=0.49\linewidth]{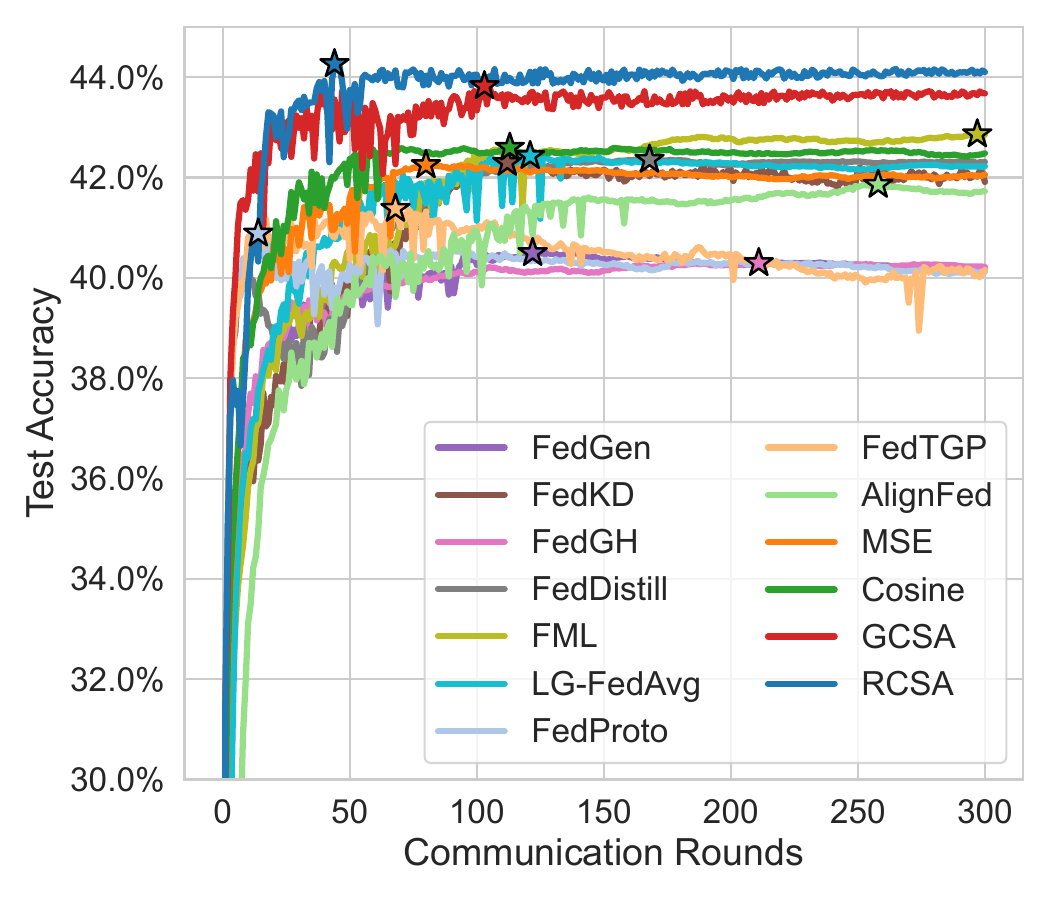}}
	\caption{The test accuracy curve of different methods on CIFAR-10 and CIFAR-100. Best accuracy is marked as star.}
	\label{fig:convergence}
\end{figure}

In this section, we examine the convergence behavior of different methods on CIFAR-10 and CIFAR-100 under the Dir(0.1) label shift setting with HtFE9, as shown in Fig.~\ref{fig:convergence}.

Our structural alignment methods (GCSA and RCSA) achieve both fast convergence and high final accuracy. Among prototype-based methods, FedProto, FedTGP, AlignFed, and coordinate alignment baselines (MSE, Cosine) converge at similar rates but plateau at noticeably lower accuracy levels. In contrast, some non-prototype HtFL methods such as FedKD and FML eventually reach competitive accuracy, yet their convergence is considerably slower, reflecting the higher computational overhead of auxiliary models or distillation modules. Our structural alignment combines the communication efficiency of prototype-based methods with improved knowledge transfer, achieving strong final performance without sacrificing convergence speed.

\subsection{Training Efficiency}
We compare the communication and computation costs of different methods on CIFAR-100 with HtFE$_4$, as shown in Table~\ref{tab:efficiency}.

Our method (Ours-GCSA) inherits the communication efficiency of prototype-based approaches. Compared to other HtFL methods that rely on auxiliary models or full-model distillation (e.g., FML, FedKD, FedMRL), our method requires significantly less communication overhead.
Regarding computation, our method introduces a modest increase in client-side training time compared to existing prototype-based methods (e.g., 36.4s vs. 32.8s for FedProto) due to the two-level alignment design. However, this overhead is marginal and does not become a system bottleneck, especially considering the substantial accuracy gains demonstrated in previous experiments.
\begin{table}[tbp]
\centering
\caption{Communication and Computation Costs on CIFAR100 with HtFE$_4$.}
\label{tab:efficiency}
\begin{tabular}{lccccc}
\toprule
\multirow{2.5}{*}{\textbf{Method}} & \multicolumn{2}{c}{\textbf{Comm. (MB)}} & \multicolumn{2}{c}{\textbf{Computation (s)}} \\
\cmidrule(lr){2-3} \cmidrule(lr){4-5}
& \textbf{Up.} & \textbf{Down.} & \textbf{Server} & \textbf{Client} \\
\midrule
\multicolumn{6}{c}{Prototype-based HtFL Methods} \\
\midrule
FedProto   & 1.69  & 3.90  & 0.118 & 32.800 \\
FedTGP     & 1.69  & 3.90  & 1.235 & 29.640 \\
AlignFed   & 3.91  & 3.91  & 0.006 & 28.768 \\
\midrule
Ours-MSE   & 1.69  & 3.90  & 0.060 & 35.698 \\
\textbf{Ours-GCSA}  & 1.69  & 3.90  & 0.060 & 36.400 \\
\midrule
\multicolumn{6}{c}{Other HtFL Methods} \\
\midrule
FedGen     & 3.91  & 25.20 & 4.036 & 29.600 \\
FedGH      & 1.69  & 3.91  & 0.955 & 25.924 \\
LG-FedAvg  & 3.91  & 3.91 & 1.936 & 26.496 \\
FML        & 70.54 & 70.54 & 0.364 & 37.428 \\
FedDistill & 0.33  & 0.76  & 0.089 & 35.815 \\
FedKD      & 62.99 & 62.99 & 2.180 & 37.184 \\
FedMRL     & 70.54 & 70.54 & 0.343 & 26.780 \\
\bottomrule
\end{tabular}
\end{table}

\subsection{Robustness to Local Epoch $E$}
The number of local training epochs $E$ is a key hyperparameter in FL. To evaluate its impact on different methods, we vary $E \in \{1, 5, 10, 20\}$ on CIFAR-100 under Dir(0.5) with HtFE$_9$. As shown in Fig.~\ref{fig:local_epoch}, we make two main observations: (1) Most methods exhibit performance degradation as $E$ increases, since less frequent communication reduces the effectiveness of global knowledge exchange. (2) Our structural alignment methods (GCSA and RCSA) remain robust across all settings and consistently outperform other HtFL methods under all local epoch configurations.
\begin{figure}[tb]
		\centerline{\includegraphics[width=\linewidth]{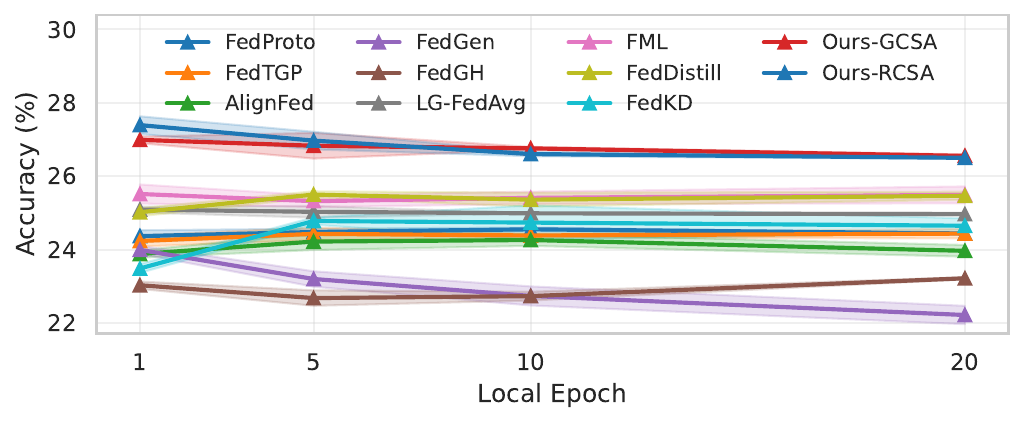}}
        \caption{Effect of local epoch $E$ on different methods.}
		\label{fig:local_epoch}
\end{figure}

\section{Discussion}
\subsection{Relationship with Contrastive Alignment}
\label{sec:contrastive_discussion}

Many prototype-based methods adopt contrastive losses (\eg, InfoNCE) rather than direct cosine matching for prototype alignment. To clarify why we use cosine-based alignment as the coordinate baseline, we analyze the contrastive loss and show that it conflates two orthogonal mechanisms.

The standard contrastive loss for aligning a representation $z_i$ with its corresponding prototype $P_{y_i}$ is formulated as:
\begin{equation}
    \mathcal{L}_{con} = -\log \frac{\exp(\text{sim}(z_i, P_{y_i}) / \tau)}{\sum_{j=1}^{C} \exp(\text{sim}(z_i, P_j) / \tau)}
\end{equation}
where $\text{sim}(\cdot, \cdot)$ denotes cosine similarity and $\tau$ is a temperature parameter. This loss can be decomposed into two distinct components:
\begin{equation}
    \mathcal{L}_{con} = \underbrace{-\frac{1}{\tau}\text{sim}(z_i, P_{y_i})}_{\text{Alignment Term}} + \underbrace{\log \left( \sum_{j=1}^{C} \exp(\text{sim}(z_i, P_j) / \tau) \right)}_{\text{Uniformity Term}}
\end{equation}

\textit{(1) Coordinate Alignment Term:} The first term is strictly equivalent to cosine-based coordinate alignment scaled by $1/\tau$.
    
\textit{(2) Uniformity Term:} The second term acts as a regularizer that encourages separation among inter-class prototypes, thereby improving the uniformity of the embedding space. This term governs the \emph{global geometry of prototypes} and is \textit{orthogonal} to the \textit{alignment mechanism} (\ie, how $z_i$ matches $P_{y_i}$), which is the primary focus of this work.

By substituting contrastive objectives with direct cosine alignment, we isolate the alignment mechanism from uniformity regularization, ensuring that our experimental comparisons strictly evaluate the efficacy of structural versus coordinate alignment without the uniformity effect acting as a confounding variable. We note that our proposed structural alignment (GCSA/RCSA) is fully compatible with uniformity constraints. One can incorporate the uniformity term alongside our structural loss to further refine the global prototype geometry, though this extension is outside the primary scope of our analysis.
\section{Conclusion}
In this paper, we identify a critical yet overlooked limitation in existing prototype-based HtFL methods: the reliance on coordinate alignment, which forces all clients to map their representations into a shared global feature subspace and suppresses model learning capacity. We reveal that coordinate alignment implicitly couples two distinct objectives: aligning inter-class semantic structure, which is beneficial for knowledge transfer, and enforcing a shared feature basis, which is unnecessary and harmful under model heterogeneity. To decouple these effects, we propose \methodname{}, which shifts the alignment objective from absolute coordinates to relational geometry. We provide two concrete instantiations, GCSA and RCSA, that effectively transfer global semantic knowledge while preserving the unique feature spaces of heterogeneous client models. Extensive experiments across diverse datasets, non-IID scenarios, modalities, and model heterogeneity levels demonstrate that \methodname{} consistently outperforms state-of-the-art methods. Furthermore, our framework is orthogonal to prototype construction strategies and can serve as a plug-and-play module to enhance existing prototype-based HtFL methods.

\bibliographystyle{IEEEtran}
\bibliography{main}

\appendix
\subsection{Proof of Proposition 1}
\label{app:proof_prop1}

\begin{proof}
Recall the coordinate alignment loss
\begin{equation}
\mathcal{L}_{coord}(Z,P) := \|\hat Z - \hat P\|_F^2,
\end{equation}
and define
\begin{equation}
R^* \in \arg\min_{R\in\mathcal{O}(d)} \|\hat Z - \hat P R\|_F^2
 \Longleftrightarrow 
R^* \in \arg\max_{R\in\mathcal{O}(d)} \langle \hat Z, \hat P R\rangle,
\end{equation}
where $\langle A,B\rangle := \mathrm{tr}(A^\top B)$.

Using the identity $\|A-B\|_F^2 = \|A\|_F^2 + \|B\|_F^2 - 2\langle A,B\rangle$, we have
\begin{equation}
\mathcal{L}_{coord}(Z,P)
= \|\hat Z\|_F^2 + \|\hat P\|_F^2 - 2\langle \hat Z, \hat P\rangle.
\end{equation}
Similarly, for any orthogonal $R\in\mathcal{O}(d)$,
\begin{align}
&\|\hat Z - \hat P R\|_F^2 \nonumber \\
&= \|\hat Z\|_F^2 + \|\hat P R\|_F^2 - 2\langle \hat Z, \hat P R\rangle \nonumber \\
&= \|\hat Z\|_F^2 + \|\hat P\|_F^2 - 2\langle \hat Z, \hat P R\rangle,
\end{align}
where we used $\|\hat P R\|_F = \|\hat P\|_F$ since $R$ is orthogonal.

Therefore,
\begin{align}
\mathcal{L}_{shape}(Z,P)
&:= \min_{R\in\mathcal{O}(d)} \|\hat Z - \hat P R\|_F^2 \\
&= \|\hat Z\|_F^2 + \|\hat P\|_F^2 - 2\max_{R\in\mathcal{O}(d)} \langle \hat Z, \hat P R\rangle \\
&= \|\hat Z\|_F^2 + \|\hat P\|_F^2 - 2\langle \hat Z, \hat P R^*\rangle .
\end{align}
Combining the above expressions yields the exact decomposition
\begin{align}
\mathcal{L}_{coord}(Z,P)
&= \mathcal{L}_{shape}(Z,P) + 2\big(\langle \hat Z, \hat P R^*\rangle - \langle \hat Z, \hat P\rangle \big) \nonumber \\
&=: \mathcal{L}_{shape}(Z,P) + \mathcal{L}_{rigid}(Z,P),
\end{align}
which proves Eq.~(9).

Finally, by the optimality of $R^*$ for the Procrustes maximization,
\begin{equation}
\langle \hat Z, \hat P R^*\rangle \ge \langle \hat Z, \hat P I\rangle = \langle \hat Z, \hat P\rangle,
\end{equation}
hence $\mathcal{L}_{rigid}(Z,P)\ge 0$, and $\mathcal{L}_{rigid}(Z,P)=0$ if and only if the identity rotation
achieves the maximum, i.e., $\langle \hat Z, \hat P\rangle = \max_{R\in\mathcal{O}(d)}\langle \hat Z, \hat P R\rangle$.
\end{proof}

\subsection{Proof of Proposition~2}
\begin{proof}
Let $\bar{P} \in \mathbb{R}^{1 \times d}$ denote the row-wise mean of $P$, and define the centered matrix
$
P_c := P - \mathbf{1}\bar{P}.
$
Similarly, let $\bar{Q}$ be the row-wise mean of $Q$ and $Q_c := Q - \mathbf{1}\bar{Q}$.
By the assumed transformation $Q = \alpha P R + \mathbf{1} b^\top$ with $\alpha>0$ and $R^\top R = I$, we have
\begin{align}
\bar{Q}
&= \frac{1}{n}\mathbf{1}^\top Q
= \frac{1}{n}\mathbf{1}^\top(\alpha P R + \mathbf{1} b^\top)
= \alpha \bar{P} R + b^\top.
\end{align}
Therefore,
\begin{align}
Q_c
&= Q - \mathbf{1}\bar{Q} \nonumber \\
&= (\alpha P R + \mathbf{1} b^\top) - \mathbf{1}(\alpha \bar{P} R + b^\top) \nonumber \\
&= \alpha (P - \mathbf{1}\bar{P}) R \nonumber \\
&= \alpha P_c R.
\end{align}
Now consider the centered Gram matrices:
\begin{align}
K_P &= P_c P_c^\top, \\
K_Q &= Q_c Q_c^\top \nonumber \\
&= (\alpha P_c R)(\alpha P_c R)^\top \nonumber \\
&= \alpha^2 P_c R R^\top P_c^\top \nonumber \\
&= \alpha^2 P_c P_c^\top \nonumber \\
&= \alpha^2 K_P.
\end{align}
Hence $K_Q$ is a positive scalar multiple of $K_P$. The cosine similarity between $K_P$ and $K_Q$ equals
\begin{align}
\frac{\langle K_P, K_Q \rangle}{\lVert K_P \rVert_F \lVert K_Q \rVert_F} 
&= \frac{\langle K_P, \alpha^2 K_P \rangle}{\lVert K_P \rVert_F \cdot \lVert \alpha^2 K_P \rVert_F} \\
&= \frac{\alpha^2 \lVert K_P \rVert_F^2}{\lVert K_P \rVert_F \cdot \alpha^2 \lVert K_P \rVert_F} \nonumber \\
&= 1, \nonumber
\end{align}
whenever $\lVert K_P\rVert_F \neq 0$ (the degenerate case $\lVert K_P\rVert_F=0$ implies all rows of $P$ are identical after centering and the alignment is trivially satisfied).
Thus,
$
\mathcal{L}_{\mathrm{GCSA}}(P,Q) = 1 - 1 = 0.
$
\end{proof}

\subsection{Proof of Proposition~3}
\begin{proof}
Assume $Q = P R$ with $R^\top R = I$.
Let $\tilde{P}$ and $\tilde{Q}$ denote row-wise $\ell_2$ normalized versions of $P$ and $Q$:
\[
\tilde{P}[i,:] = \frac{P[i,:]}{\lVert P[i,:]\rVert_2}, 
\qquad
\tilde{Q}[i,:] = \frac{Q[i,:]}{\lVert Q[i,:]\rVert_2}.
\]
Because $R$ is orthogonal, it preserves Euclidean norms, so for every row $i$,
\[
\lVert Q[i,:]\rVert_2 = \lVert P[i,:]R\rVert_2 = \lVert P[i,:]\rVert_2,
\]
and therefore
\[
\tilde{Q}[i,:] = \frac{P[i,:]R}{\lVert P[i,:]\rVert_2} = \tilde{P}[i,:]R,
\quad\text{or equivalently}\quad
\tilde{Q} = \tilde{P}R.
\]
Now consider the squared-distance RDM entries used in RCSA:
\begin{align}
\mathrm{RDM}_Q[i,j]
&= \big\lVert \tilde{Q}[i,:] - \tilde{Q}[j,:] \big\rVert_2^2
= \big\lVert \tilde{P}[i,:]R - \tilde{P}[j,:]R \big\rVert_2^2 \nonumber \\
&= \big\lVert (\tilde{P}[i,:]-\tilde{P}[j,:])R \big\rVert_2^2
= \big\lVert \tilde{P}[i,:]-\tilde{P}[j,:] \big\rVert_2^2  \nonumber \\
&= \mathrm{RDM}_P[i,j],
\end{align}
where we used norm preservation under orthogonal transforms in the third equality.
Thus $\mathrm{RDM}_Q = \mathrm{RDM}_P$, which implies their upper-triangular vectorizations are identical:
$
\mathrm{vec}(\mathrm{RDM}_Q) = \mathrm{vec}(\mathrm{RDM}_P).
$
Hence the cosine similarity between these two vectors equals $1$ (assuming the vector is nonzero; otherwise the loss is trivially $0$), and therefore
\[
\mathcal{L}_{\mathrm{RCSA}}(P,Q) = 1 - 1 = 0.
\]
\end{proof}

\vfill

\end{document}